\def\eqref#1{equation~\ref{#1}}
\def\1{\bm{1}}
\def\rvw{{\mathbf{w}}}
\def\rvx{{\mathbf{x}}}
\def\rvz{{\mathbf{z}}}
\DeclareMathAlphabet{\mathsfit}{\encodingdefault}{\sfdefault}{m}{sl}
\SetMathAlphabet{\mathsfit}{bold}{\encodingdefault}{\sfdefault}{bx}{n}
\def\gB{{\mathcal{B}}}
\def\gD{{\mathcal{D}}}
\def\gE{{\mathcal{E}}}
\def\gF{{\mathcal{F}}}
\def\gG{{\mathcal{G}}}
\def\gL{{\mathcal{L}}}
\def\gP{{\mathcal{P}}}
\def\gS{{\mathcal{S}}}
\def\gT{{\mathcal{T}}}
\def\gU{{\mathcal{U}}}
\def\gW{{\mathcal{W}}}
\newcommand{\R}{\mathbb{R}}
\DeclareMathOperator*{\argmin}{arg\,min}
\DeclareMathOperator{\supp}{supp}
\DeclareMathOperator{\dist}{dist}
\DeclareMathOperator{\diam}{diam}
\newtheorem{theorem}{Theorem}
\newtheorem{corollary}{Corollary}
\newtheorem{definition}{Definition}
\newtheorem{proposition}{Proposition}
\newtheorem{lemma}{Lemma}
\newtheorem*{lemma*}{Lemma}
\newtheorem{remark}{Remark}
\title{Dataset Distillation as Pushforward Optimal Quantization}
\author{%
  Hong Ye Tan \\
  UCLA\\
  University of Cambridge\\
  \texttt{hyt35@math.ucla.edu} \\
  \And
  Emma Slade \\
  Tangram Therapeutics\\
  GSK.ai \\
  \texttt{emma.slade@tangramtx.com}
}
\begin{document}

\maketitle

\begin{abstract}
Dataset distillation aims to find a small synthetic training set, such that training on the synthetic data achieves similar performance to training on a larger training dataset. Early methods solve this by interpreting the distillation problem as a bi-level optimization problem. On the other hand, disentangled methods bypass pixel-space optimization by matching data distributions and using generative techniques, leading to better computational complexity in terms of size of both training and distilled datasets. We demonstrate that by using latent spaces, the empirically successful disentangled methods can be reformulated as an optimal quantization problem, where a finite set of points is found to approximate the underlying probability measure. In particular, we link disentangled dataset distillation methods to the classical problem of optimal quantization, and are the first to demonstrate consistency of distilled datasets for diffusion-based generative priors. We propose Dataset Distillation by Optimal Quantization (DDOQ), based on clustering in the latent space of latent diffusion models. Compared to a similar clustering method D\textsuperscript{4}M, we achieve better performance and inter-model generalization on the ImageNet-1K dataset using the same model and with trivial additional computation, achieving SOTA performance in higher image-per-class settings. Using the distilled noise initializations in a stronger diffusion transformer model, we obtain competitive or SOTA distillation performance on ImageNet-1K and its subsets, outperforming recent diffusion guidance methods.
\end{abstract}

\section{Introduction}
Training powerful neural networks requires a large amount of data, and thus induces high computational requirements. \emph{Dataset distillation} (DD) targets this computational difficulty by changing the data, as opposed to other parts of training such as optimization or architecture \citep{wang2018dataset}. The DD objective consists of finding a synthetic training set, such that training a neural network on the synthetic data yields similar performance. 

There are several closely related notions of reducing computational load when training new models on datasets. Core-set methods find a subset of training data (as opposed to synthetic data) that achieve good training performance \citep{mirzasoleiman2020coresets,feldman2020core}. Model distillation, sometimes known as knowledge distillation, aims to train a smaller model that predicts the output of a larger model \citep{gou2021knowledge,polino2018model}. Importance sampling methods accelerate training by weighting training data, finding examples that are more influential for training \citep{paul2021deep}. For more detailed surveys on dataset distillation methods and techniques, we refer to \citep{yu2023dataset,sachdeva2023data}.

\begin{figure}[t]\label{fig:pp}
\centering
\begin{tikzpicture}[
    line width=0.9pt,
    >=latex,
    circ/.style={circle, draw, inner sep=1.5pt},
    smallpt/.style={circle, fill=black, inner sep=1pt},
    dashedregion/.style={line width=0.7pt, dashed}
]

\begin{scope}[shift={(0,3)}, scale=0.6]

    \draw[smooth cycle, tension=.6]
        plot coordinates{(-1,0) (0.5,2) (2,2) (4,2.5) (4.5,0)};

    \coordinate (A) at (1,1);
    \draw (A) arc(140:40:1);
    \draw (A) arc(-140:-20:1);
    \draw (A) arc(-140:-160:1);


    \node at (1.8,3.5) {Training data distribution};
\end{scope}

\begin{scope}[shift={(-1.5,-1)}, scale=0.9]

    \draw[]
        (0,0) -- (3,0) -- (5,1.5) -- (2,1.5) -- cycle;

    \node[smallpt] (q1) at (0.85, 0.3) {};

    \node[smallpt] (q2) at (2.1,0.5) {};

    \node[smallpt] (q3) at (1.7,0.9) {};

    \node[smallpt] (q4) at (3.5,1.1) {};
    \node[smallpt] (q5) at (3.2,0.5) {};
    \node[smallpt] (q5) at (4.2,1.25) {};
    \node[smallpt] (q5) at (2.6,1.3) {};
    \node[smallpt] (q5) at (0.5,0.13) {};

    \node at (2.5,-0.7) {Low-dimensional latent space};
\end{scope}

\begin{scope}[shift={(7,-1)}, scale=0.9]

    \draw[]
        (0,0) -- (3,0) -- (5,1.5) -- (2,1.5) -- cycle;
        
    \node[circ, fill=red!20] (br1) at (1.3,0.4) {};
    \node[circ, fill=red!20] (br2) at (2.4,0.6) {};
    \node[circ, fill=red!20] (br3) at (4,1.3) {};

    \draw[->] (br1) -- ++(0,0.5);
    \draw[->] (br2) -- ++(0,0.9);
    \draw[->] (br3) -- ++(0,0.6);

    \node at (2.5,-0.7) {Distilled latent points and weights};
\end{scope}

\begin{scope}[shift={(8.3,3)}, scale=0.6]

    \draw[smooth cycle, tension=.6]
        plot coordinates{(-1,0) (0.5,2) (2,2) (4,2.5) (4.5,0)};

    \coordinate (A) at (1,1);
    \draw (A) arc(140:40:1);
    \draw (A) arc(-140:-20:1);
    \draw (A) arc(-140:-160:1);

    \node[circ, fill=red!20] (tr1) at (0.6,0.4) {};
    \node[circ, fill=red!20] (tr2) at (3.2,0.7) {};
    \node[circ, fill=red!20] (tr3) at (1.4,1.75) {};
    \draw[->] (tr1) -- ++(0,0.7);
    \draw[->] (tr2) -- ++(0,1.2);
    \draw[->] (tr3) -- ++(0,0.8);

    \node[circ, fill=red!20] at (4.45,3.5) {};
    \draw[->] (5.25,3.2) -- ++(0,0.8);
    \node at (1.8,3.5) {Distilled data distribution $\{( \quad,\quad)\}$};
\end{scope}


\draw[->] (1,2.3) -- (1,1) node[midway,left,xshift=-2pt,yshift=.5pt] {(1)} node[midway,right,xshift=2pt]{Encoding};

\draw[->] (3.5,0) -- (6.5,0) node[midway,above,yshift=2pt] {(2)} node[midway,below,yshift=-2pt] {Clustering};

\draw[->] (8.5,1) -- (8.5,2.3) node[midway,left,xshift=-2pt,yshift=.5pt]{(3)} node[midway,right,xshift=2pt]{Decoding};

\end{tikzpicture}

\caption{{Sketch of the proposed method pipeline. Using an encoder/decoder model, we map our high dimensional data to a low-dimensional space, which is then clustered using $k$-means. The clustered latent points and weights are then decoded to obtain the distilled data. This work argues that the weights are important when decoding; furthermore, ``disentangled'' distillation using an encoder-cluster-decoder framework is asymptotically consistent.}}
\end{figure}


\subsection{Bi-level formulation of dataset distillation} Denote a training set (more generally, distribution of training data) by $\mathcal{T}$, and the expected and empirical risks (test and training loss) by $\mathcal{R}$ and $\mathcal{L}$ respectively, evaluated for some parameter $\theta$. The goal of DD is to find a synthetic dataset $\mathcal{S}$ (of given size) minimizing the test loss discrepancy \citep{sachdeva2023data}:
\begin{equation}\label{eq:DDBilevel}
  \mathcal{S} = \argmin_S \left|\mathcal{R}(\argmin_\theta \gL(S)) - \mathcal{R}(\argmin_\theta \gL(\gT))\right|.
\end{equation}
This formulation is computationally intractable. Approximations include replacing the minimum discrepancy objective with maximum test performance, replacing the learning algorithm $\Phi$ with an inner neural network optimization problem, and solving the outer minimization problem using gradient methods. Common heuristic relaxations to the bi-level formulation \labelcref{eq:DDBilevel} include meta-learning \citep{wang2018dataset,deng2022remember}, distribution matching \citep{zhao2023dataset}, and trajectory matching \citep{cazenavette2022dataset}. Other methods include neural feature matching \citep{zhou2022dataset,loo2022efficient} and the corresponding neural tangent kernel methods \citep{nguyen2021dataset,nguyen2020dataset}, representative matching \citep{liu2023dream}, and group robustness \citep{vahidian2024group}. For better scaling, \cite{cazenavette2022dataset,moser2024latent} consider using generative priors such as GANs to generate more visually coherent images, increasing performance and replacing the need for optimization with a neural network inversion task.

While the bi-level formulation follows naturally from the qualitative problem statement of dataset distillation, there are two main drawbacks, namely \textbf{computational complexity} and \textbf{model architecture dependence}. The dimensionality of the underlying optimization problems limit the applicability on large scale datasets, which are particularly useful for computationally limited applications. For example, the ImageNet-1K dataset consists of 1.2M training images, totalling over 120GB of memory \citep{deng2009imagenet}. The full dataset ImageNet-21K consists of over 14M images and takes up around 1.2TB of memory, which is generally infeasible to train expert models on, and makes backpropagation through network training steps impossible.

\subsection{Disentangled and diffusion methods} 

\cite{yin2023squeeze} is the first work to ``disentangle'' the bi-level optimization framework into three separate problems, named \textit{Squeeze, Recover and Relabel} (SRe\textsuperscript{2}L). In particular, the inner neural network optimization problem is replaced with matching statistics of batch-normalization layers. \textit{Curriculum Data Augmentation} (CDA) uses adaptive training to get more performance \citep{yin2024dataset}. 
\cite{liu2023dataset} considers optimizing images such that neural network features are close to Wasserstein barycenters of the training image features. \cite{sun2024diversity} proposes \textit{Realistic Diverse and Efficient Dataset Distillation} (RDED), which replaces the latent clustering objective with a patch-based adversarial objective. 
\cite{su2024d} considers clustering directly in the latent space of a latent diffusion model (LDM) \citep{rombach2022high}, named \textit{Dataset Distillation via Disentangled Diffusion Model} (D\textsuperscript{4}M). This avoids backpropagation when distilling and has constant memory usage with respect to images per class (IPC), a direct advantage over the linear memory scaling of optimization-based methods. Recent state-of-the-art methods consider fine-tuning diffusion models to have better statistics \citep{gu2024efficient}, and guiding the diffusion to maximize the influence of the distilled points \citep{chen2025influence}, similarly to active learning. 

While dataset distillation has had extensive experimental effort, few theoretical justifications or computable formalizations exist in the literature. \cite{sachdeva2023data} proposes a high-level formulation based on minimizing the difference in test loss between learning on the full dataset versus the synthetic dataset. \cite{kungurtsev2024dataset} considers dataset distillation as dependent on the desired inference task (typically classification with cross-entropy loss for image data), interpreting trajectory matching as a mean-field control problem. {No prior literature on dataset distillation addresses theoretically whether or not the distilled datasets are reasonable approximations of the input training data distribution.} In this work, we address convergence in measure space {for} disentangled methods, demonstrating consistency and convergence of the distilled datasets.


We summarize the contributions of this work as follows.
\begin{enumerate}
  \item {We theoretically justify the disentangled dataset distillation framework, exploiting its structure to show that these methods converge to the true data distribution as the number of distilled points increases, using classical notions of optimal quantization and Wasserstein distance.} Motivated by the empirical usage of clustering in latent spaces, we show in \Cref{thm:main} that optimal quantizations induce convergent approximations of gradients of population risk. Furthermore, \Cref{cor:rate} shows the approximation rate is given by $\mathcal{O}(K^{-1/d})$, where $d$ is the dimension of the latent space and $K$ is the number of quantization points. This motivates the usage of a low-dimensional latent space to model the data distribution.
  \item We propose \textit{Dataset Distillation by Optimal Quantization} (DDOQ) in \Cref{alg:ddoq}, a DD algorithm based on clustering in a low-dimensional latent space. Compared to a recent disentangled method D\textsuperscript{4}M, our proposed method has a smaller Wasserstein distance between the distilled latent points and latent data distribution, indicating better approximation.
  \item We algorithmically compare our proposed method with D\textsuperscript{4}M and various common disentangled baselines on the ImageNet-1K dataset using the same generative diffusion model backbone, demonstrating significantly better classification accuracy at varied IPC budgets and better cross-architecture generalization. To demonstrate the potential of DDOQ, we additionally use the stronger diffusion transformer (DiT) backbone, used in recent diffusion-based methods. We provide a central comparison with SOTA disentangled and diffusion-based distillation methods, yielding competitive or better results than existing SOTA methods on ImageNet-1K and its subsets.
\end{enumerate}

This work is structured as follows. \Cref{sec:bg} covers related background and convergence results, including optimal quantization and diffusion models. \Cref{thm:main} in \Cref{sec:method} demonstrates consistency of the optimal quantizers when passed through diffusion-based generative priors to the image space, and motivates adding automatically-learned weights in the prototyping phase. \Cref{ssec:DDOQpropose} details the data distillation pipeline in a sequential manner, from distillation to training new models using the computed weights. \Cref{sec:experiments} contains experiments of our proposed method against the previously SOTA D\textsuperscript{4}M method as well as other recent SOTA baselines on the large-scale ImageNet-1K dataset.

\section{Background}\label{sec:bg}

Define $\mathcal{P}_2(\R^d)$ to be the set of probability measures on $\R^d$ with finite second moment, not necessarily admitting a density with respect to the Lebesgue measure. We use $\gW_2$ to denote the Wasserstein-2 distance between two probability distributions in $\mathcal{P}_2(\R^d)$ \citep{santambrogio2015optimal}.

\subsection{Optimal Quantization}\label{ssec:oq}
For a probability measure $\mu \in \mathcal{P}_2(\R^d)$, an \emph{optimal quantization} (or vector quantization) at level $K$ is a set of points $\{x_1,...,x_K\} \subset \R^d$ such that the $\mu$-averaged Euclidean distance to the quantized points is minimal. This can be formulated as the minimizer of the (quadratic) distortion, defined as follows.

\begin{definition}[Quadratic distortion]
      For a quantization grid $(x_1,...,x_K) \in (\R^d)^K$, the corresponding \emph{Voronoi cells} are 
      \begin{equation}\label{eq:voronoiCell}
            C_i = \{y \in \R^d \mid \|y-x_i\| = \min_{j} \|y-x_j\|\},\quad i=1,...,K.
      \end{equation}

      Given a measure $\mu \in \mathcal{P}_2(\R^d)$, the \emph{(quadratic) distortion} function $\gG = \gG_\mu$ takes a tuple of points $(x_1,...,x_K)$ and outputs the average squared distance to the set:
      \begin{equation}
            \gG : (x_1,...,x_K) \mapsto \int_{\R^d} \min_i\|x - x_i\|^2\,\mu(\mathrm{d}x) = \mathbb{E}_{X \sim \mu}[\min_i \|X-x_i\|^2].
      \end{equation}
      We will write $\gG_{K,\mu}$ to mean the distortion function at level $K$, i.e. with domain $(\R^d)^K$, and drop the subscripts where it is clear. An \emph{optimal quantization} is a minimizer of $\mathcal{G}$.
\end{definition}
Note that the assumption that $\mu \in \mathcal{P}_2(\R^d)$ has finite second moments implies that the quadratic distortion is finite for any set of points, and that an optimal quantization exists \citep{pages2015introduction}. We note that the optimal quantization weights are uniquely determined by the quantization points. This gives equivalence of the distortion minimization problem to the Wasserstein minimization problem, when restricted to measures of finite support \citep{pages2015introduction}.
\begin{proposition}\label{prop:OQWassersteinEquiv}
      Suppose we have a quantization $\rvx = \{x_1,...,x_K\}$. Assume that the (probability) measure $\mu$ is null on the boundaries of the Voronoi cells $\mu(\partial C_i) = 0$. Then the measure $\nu$ that minimizes the Wasserstein-2 distance \labelcref{eq:wassDistance} and satisfies $\supp \nu \subset \{x_1,...,x_K\}$ is $\nu_K = \sum_{i=1}^K \mu(C_i) \delta(x_i)$, {where $\delta(x_i)$ denotes the Dirac delta distribution at $x_i$.} Moreover, the optimal coupling is given by the projection onto the centroids.
\end{proposition}
\begin{proof}
    Deferred to \Cref{app:propOQWass}.
\end{proof}

      
      
In other words, finding points that minimize the quadratic distortion is equivalent to finding a $K$-finitely supported (probability) measure, minimizing the Wasserstein-2 distance to the underlying measure. 
\begin{remark}\label{rmk:barycenter}
    The case where the approximating measure is a uniform Dirac mixture is called the \emph{Wasserstein barycenter} problem \citep{cuturi2014fast}. The Wasserstein barycenter has higher error than the optimal quantization, but it admits an easily computable dual representation.
\end{remark}


The quantizer can be shown to have nice approximation properties when taking expectations of functions. A prototypical example for DD would have $f$ be the gradient of a neural network with respect to some loss function. This implies that a data distribution $\mu$ and its quantization $\nu$ induce similar training dynamics.
\begin{proposition}\label{thm:weightedAvgWass}
      Let $f: \R^d \rightarrow \R$ be an $L$-Lipschitz function. For a probability measure $\mu \in \gP_2(\R^d)$ that assigns no mass to hyperplanes, and a quantization $\rvx = (x_1,..., x_k)$, let $\nu = \sum_{i=1}^K \mu(C_i) \delta(x_i)$ be the corresponding Wasserstein-optimal measure with support in $\rvx$, as in \Cref{prop:OQWassersteinEquiv}. The difference between the population risk $\mathbb{E}_\mu[f]$ and the weighted empirical risk $\mathbb{E}_\nu[f]$ is bounded as
      \begin{equation}
            \mathbb{E}_\mu[f] - \mathbb{E}_\nu[f] \le L \gG(\rvx)^{1/2}.
      \end{equation}
\end{proposition}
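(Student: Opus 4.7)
The plan is to realize the gap $\mathbb{E}_\mu[f]-\mathbb{E}_\nu[f]$ as an integral along the explicit Wasserstein-optimal coupling supplied by Proposition~\ref{prop:OQWassersteinEquiv}, apply the Lipschitz bound pointwise, and then use Cauchy--Schwarz to pass from an $L^1$-type integral to the $L^2$-type quantity $\mathcal{G}(\rvx)^{1/2}$. Conceptually this is just the Kantorovich--Rubinstein bound $|\mathbb{E}_\mu[f]-\mathbb{E}_\nu[f]|\le L\,\gW_1(\mu,\nu)$ chained with $\gW_1\le\gW_2=\mathcal{G}(\rvx)^{1/2}$, but written out via the nearest-neighbor projection so that no duality machinery is required.

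First I would invoke Proposition~\ref{prop:OQWassersteinEquiv}: because $\mu$ assigns no mass to $\partial C_i$, the nearest-neighbor map $T:\R^d\to\{x_1,\ldots,x_K\}$ defined by $T(x)=x_i$ for $x\in C_i$ is $\mu$-almost everywhere well-defined and measurable. Since $T^{-1}(\{x_i\})=C_i$ up to a $\mu$-null set, the pushforward satisfies $T_\#\mu=\sum_i \mu(C_i)\delta(x_i)=\nu$, so $\gamma:=(\mathrm{id},T)_\#\mu$ is a coupling of $\mu$ and $\nu$, and by Proposition~\ref{prop:OQWassersteinEquiv} it is Wasserstein-optimal.

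With this coupling in hand, I write
\begin{equation*}
\mathbb{E}_\mu[f]-\mathbb{E}_\nu[f]=\int_{\R^d}\bigl(f(x)-f(T(x))\bigr)\,\mu(\mathrm{d}x),
\end{equation*}
and apply the Lipschitz bound $|f(x)-f(T(x))|\le L\|x-T(x)\|=L\min_i\|x-x_i\|$ pointwise. Integrating and then applying the Cauchy--Schwarz inequality with respect to the probability measure $\mu$ yields
\begin{equation*}
\mathbb{E}_\mu[f]-\mathbb{E}_\nu[f]\le L\int_{\R^d}\min_i\|x-x_i\|\,\mu(\mathrm{d}x)\le L\left(\int_{\R^d}\min_i\|x-x_i\|^2\,\mu(\mathrm{d}x)\right)^{1/2}=L\,\mathcal{G}(\rvx)^{1/2},
\end{equation*}
which is the claimed inequality.

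There is essentially no substantive obstacle: the only delicate point is verifying that $T$ is $\mu$-a.e.\ well-defined and that $T_\#\mu=\nu$, both of which follow directly from the hyperplane-null hypothesis, since the ambiguities of the argmin occur only on the perpendicular bisectors of pairs $x_i,x_j$. All remaining steps are a one-line Lipschitz estimate and a one-line Cauchy--Schwarz.
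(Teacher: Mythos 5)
Your proof is correct and is essentially the same argument as the paper's: the paper decomposes $\mathbb{E}_\mu[f]-\mathbb{E}_\nu[f]$ into $\sum_i\int_{C_i}(f(x)-f(x_i))\,\mu(\mathrm{d}x)$, which is identical to your $\int(f(x)-f(T(x)))\,\mu(\mathrm{d}x)$ with $T$ the Voronoi projection, then applies the Lipschitz bound and H\"older/Cauchy--Schwarz exactly as you do. The only cosmetic difference is that you phrase the decomposition via the explicit pushforward map $T$ and mention the Kantorovich--Rubinstein framing as motivation; the substance of the argument is unchanged.
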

\begin{proof}
      Deferred to \Cref{app:weightedAvgWass}.
\end{proof}


\subsubsection{Solving the optimal quantization problem}\label{sssec:oqAlgos}
To find an optimal quantizer, we use the competitive learning vector quantization (CLVQ) algorithm \citep{ahalt1990competitive}. This arises directly from gradient descent on the quadratic distortion. The gradient of the distortion has a representation in terms of $\mu$-centroids of the corresponding Voronoi cells, given explicitly in \Cref{app:CLVQ}.

The CLVQ algorithm is presented in the appendix as \Cref{alg:CLVQ}. It consists of iterating: (i) sampling from $\mu$, (ii) computing the nearest cluster centroid, and (iii) updating the cluster centroid with weighted average.
CLVQ is \textit{equivalent to the mini-batch $k$-means method} when the step-sizes $\gamma_i$ are chosen to be the reciprocals of the number of points per-cluster \citep{sculley2010web,scikit-learn}. We can thus interpret mini-batch $k$-means as finding a local minima of the optimal quantization problem.

The CLVQ algorithm produces points $\rvx = (x_1,...,x_K)$, but it remains to compute the associated weights approximating the measures of the Voronoi cells $\mu(C_i)$ as in \Cref{prop:OQWassersteinEquiv}. This can be done in an online manner within the same iterations \citep{pages2015introduction}.

\begin{proposition}[{\citealt[Prop. 7]{bally2003quantization}}]\label{prop:CLVQ}
      Assume that the measure $\mu \in \gP_{2+\eta}(\R^d)$ for some $\eta>0$, and that it assigns no mass to hyperplanes. Assume further that the grids $\rvx^{(t)}$ produced by CLVQ converge to a stationary grid $\rvx^*$, i.e. $\nabla \gG(\rvx^*) = 0$, and that the step-sizes satisfy $\sum \gamma_k = +\infty$ and $\sum \gamma_k^{1+\delta} < \infty$ for some $\delta>0$. Then,
      \begin{enumerate}
            \item The companion weights $w_k$ converge almost surely to the limiting weights $\mu(C_k^*)$;
            \item The moving average of the empirical quadratic distortion converges to the limiting distortion:
            \begin{equation*}
                  \frac{1}{t} \sum_{k=1}^t \min_{1 \le i \le K} \|X_k - x_i^{(k)}\|^2 \rightarrow \gG(\rvx^*).
            \end{equation*}
      \end{enumerate}
\end{proposition}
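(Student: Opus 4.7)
The plan is to recognize both statements as instances of the classical Robbins--Monro stochastic approximation framework and to reduce each to (a) a martingale noise term that is killed by the step-size conditions, plus (b) a drift term that converges thanks to the hypothesis $\rvx^{(t)} \to \rvx^*$ together with the no-hyperplane-mass assumption on $\mu$.

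For part (1), I would rewrite the weight recursion as
\[
w_k^{(i+1)} = (1-\gamma_i)\,w_k^{(i)} + \gamma_i\, \mu(C_k^{(i)}) + \gamma_i\,\Delta M_i^k,
\]
where $\Delta M_i^k = \mathbf{1}_{\{k = k_{\mathrm{win}}^{(i)}\}} - \mu(C_k^{(i)})$ is a bounded $\mathcal{F}_i$-martingale difference with $\mathcal{F}_i = \sigma(X_1,\dots,X_i)$. The step-size conditions $\sum \gamma_k = +\infty$ and $\sum \gamma_k^{1+\delta} < \infty$ imply $\gamma_k \to 0$, hence $\sum \gamma_k^2 < \infty$, so the stochastic integral $\sum_i \gamma_i \Delta M_i^k$ converges almost surely by the martingale convergence theorem. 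For the drift term, the crucial step is to upgrade the assumed convergence $\rvx^{(i)} \to \rvx^*$ to $\mu(C_k^{(i)}) \to \mu(C_k^*)$; this is where the no-hyperplane-mass hypothesis is used, since the symmetric difference $C_k^{(i)} \triangle C_k^*$ is contained in a shrinking tube around the piecewise-hyperplane boundaries of $C_k^*$, and the continuity of the map $\rvx \mapsto \mu(C_k(\rvx))$ at $\rvx^*$ follows. A standard weighted-averaging lemma then shows that the deterministic recursion $y^{(i+1)} = (1-\gamma_i) y^{(i)} + \gamma_i \mu(C_k^{(i)})$ inherits the limit $\mu(C_k^*)$, and adding back the summable noise yields $w_k^{(i)} \to \mu(C_k^*)$ a.s.

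For part (2), I would decompose $S_i := \min_{1\le j\le K} \|X_i - x_j^{(i)}\|^2 = \gG(\rvx^{(i)}) + N_i$ with $N_i$ an $\mathcal{F}_i$-martingale difference. Continuity of $\gG$ together with $\rvx^{(i)} \to \rvx^*$ gives $\gG(\rvx^{(i)}) \to \gG(\rvx^*)$, and Cesàro averaging transfers this to $\tfrac{1}{t}\sum_{i\le t} \gG(\rvx^{(i)}) \to \gG(\rvx^*)$. For the martingale part, the $\gP_{2+\eta}$ assumption provides a uniform bound $\sup_i \mathbb{E}|N_i|^{1+\eta/2} < \infty$ (the grids stay in a bounded region since they converge), which together with a Marcinkiewicz--Zygmund-type strong law for martingale differences gives $\tfrac{1}{t}\sum_{i\le t} N_i \to 0$ a.s., and adding the two contributions closes part (2).

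The main obstacle is establishing the continuity $\rvx \mapsto \mu(C_k(\rvx))$ at the stationary grid $\rvx^*$: Voronoi cells do not depend continuously on their generators under general measures, and the proof genuinely relies on the no-hyperplane-mass assumption to control $\mu(C_k^{(i)} \triangle C_k^*)$. Once this continuity is in hand, both parts reduce to fairly standard stochastic-approximation and martingale-SLLN manipulations.
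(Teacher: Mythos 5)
The paper does not prove this proposition; it is stated as a direct citation to Bally and Pagès (2003, Prop.\ 7), so there is no internal proof to compare against. Your stochastic-approximation reading --- a martingale-difference noise term plus a drift term controlled by the continuity of the map $\rvx \mapsto \mu(C_k(\rvx))$ at the stationary grid, with the no-hyperplane-mass assumption supplying that continuity --- is the right framework and is essentially what the cited reference does. Part (2), via Cesàro averaging of $\gG(\rvx^{(i)}) \to \gG(\rvx^*)$ plus a Chow/Marcinkiewicz--Zygmund strong law for martingale differences (using $\gP_{2+\eta}(\R^d)$ together with boundedness of the convergent grids to get a uniform $(1+\eta/2)$-th moment), is sound.

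There is a genuine gap in part (1), however, at the line ``$\gamma_k \to 0$, hence $\sum \gamma_k^2 < \infty$.'' This implication is false, and the stated hypothesis does not repair it: $\gamma_k = k^{-1/2}$ satisfies $\sum \gamma_k = \infty$, $\gamma_k \to 0$, and $\sum \gamma_k^{1+\delta} < \infty$ for every $\delta > 1$, yet $\sum \gamma_k^2 = \infty$. In that regime the series $\sum_i \gamma_i \Delta M_i^k$ genuinely need not converge almost surely --- for i.i.d.\ Rademacher increments $\Delta M_i$, Kolmogorov's three-series theorem shows $\sum_i \gamma_i \Delta M_i$ diverges whenever $\sum \gamma_i^2 = \infty$ --- so both the $L^2$-martingale convergence theorem and Chow's theorem (which needs the moment exponent in $(1,2]$) are unavailable. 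Since the proposition only assumes $\sum \gamma_k^{1+\delta} < \infty$ for \emph{some} $\delta>0$, allowing $\delta>1$, the ``add a convergent perturbation'' shortcut breaks precisely where the weaker step-size hypothesis departs from $\sum \gamma_k^2 < \infty$. A correct argument must use the exponential contraction in the recursion itself: write $w_k^{(t)}$ as the weighted sum $\sum_{i<t} c_{i,t}\bigl(\mu(C_k^{(i)})+\Delta M_i^k\bigr)$ with $c_{i,t} = \gamma_i \prod_{j=i+1}^{t-1}(1-\gamma_j)$, observe that $\sum_i c_{i,t} \to 1$ while $\sum_i c_{i,t}^{1+\delta} \to 0$, and control the resulting martingale-array fluctuations along a geometric subsequence of times, rather than treating the noise as a once-and-for-all convergent series added to a deterministic trajectory. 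If you strengthen the step-size hypothesis to $\sum \gamma_k^2 < \infty$ (i.e.\ insist $\delta \le 1$), your argument is correct as written.
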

Using these weights, we target the problem of approximating optimal quantizers, rather than the Wasserstein barycenter problem. The addition of the weights reduces the distortion in the latent space. In the following section, we show that this reduced distortion carries over to the image space, which leads to better training fidelity as given using \Cref{thm:weightedAvgWass}.



\subsection{Score-based diffusion}\label{ssec:diffusion}
To connect the quantization error on the latent space with the quantization error in the image space, we need to consider properties of the latent-to-image process. In particular, we focus on score-based diffusion models, seen as discretizations of particular noising SDEs \citep{song2020score}. Consider the following SDE, where $W$ is a standard Wiener process:
\begin{equation}\label{eq:forwardSDE}
      \dd{x} = f(x,t) \dd{t} + g(t) \dd{W}.
\end{equation}
The reverse of the diffusion process is given by the reverse-time SDE \citep{haussmann1986time}, where the density of $x$ at time $t>0$ is given by $p_t$,
\begin{equation}\label{eq:backwardSDE}
      \dd{x} = [f(x,t) - g(t)^2 \nabla_x \log p_t(x)] \dd{t} + g(t) \dd\bar{W},
\end{equation}
and $\bar{W}$ is a reverse time Wiener process. For an increasing noising schedule $\sigma(t)$ or noise-scale $\beta(t)$, the variance-exploding SDE (VESDE, or Brownian motion) and variance-preserving SDE (VPSDE, or Ornstein--Uhlenbeck process) are given respectively by
\begin{equation}
      \dd{x} = \sqrt{\frac{\dd{[\sigma^2(t)]}}{\dd{t}}} \dd{W}\quad \text{and} \quad \dd{x} = -\frac{1}{2} \beta(t) x \dd{t} + \sqrt{\beta(t)} \dd{W}.
\end{equation}

These SDEs are related to early diffusion models. Specifically, VPSDE corresponds to denoising score matching with Langevin dynamics \citep{song2019generative}, and VESDE to denoising diffusion probabilistic models \citep{sohl2015deep,ho2020denoising}. Using this particular structure, we may obtain convergence results as seen in the next section. We note that by time-rescaling, we may assume without loss of generality that the noising schedule is linear $\sigma^2(t) = t$ or the noise-scale is constant $\beta(t)=1$.


The goal in question: given Wasserstein-2 convergence of the marginals $\nu_T^{(k)} \rightarrow \mu_T$, we wish to derive a bound on expectations $\mathbb{E}_{\nu_\delta^{(k)}}[f] \xrightarrow{?} \mathbb{E}_{\mu_\delta}[f]$, for some small fixed $\delta \in (0,T)$ and $f:\R^d \rightarrow \R$ satisfying some regularity conditions. In other words, we wish to show that generative diffusion preserves closeness of data distributions. Such a bound would directly link to training neural networks with surrogate data, e.g. by taking $f$ to be the gradient of a loss function with respect to some network parameters.

\begin{remark}
    Having $\delta=0$ may not be well defined because of non-smoothness and blowup of the score at time 0 for singular measures \citep{pidstrigach2022score,yang2023lipschitz}.
\end{remark}

\begin{remark}
    Working with weak convergence is necessary due to the singular empirical measures. \cite{pidstrigach2022score} demonstrates that the backward SDE process satisfies a data-processing inequality, showing that the $f$-divergence after backwards diffusion is at most the $f$-divergence at marginal time $T$. However, $f$-divergences require absolute continuity of the compared marginal with respect to the underlying diffused distribution, which is equivalent to absolute continuity with respect to the Lebesgue measure by the H\"ormander condition. This rules out singular initializations such as empirical measures, which arise in dataset distillation.
\end{remark}
\section{Dataset Distillation as Optimal Quantization}\label{sec:method}
Our main result \Cref{thm:main} gives consistency of dataset distillation for a score-based diffusion prior in the image space. Later, we use this in \Cref{ssec:DDOQpropose} to present DDOQ as a modification of the D\textsuperscript{4}M method. By simply changing the clustering objective from a Wasserstein barycenter to an optimal quantization by adding weights, we can effectively reduce the Wasserstein distance to the data distribution. {Moreover, from \Cref{prop:CLVQ}, the weights are automatically determined during the $k$-means clustering process.}

\begin{theorem}\label{thm:main}
      Consider the VESDE/Brownian motion or the VPSDE/Ornstein--Uhlenbeck process
      \begin{equation}
            \dd{x} = \dd{W}\quad \text{or}\quad \dd{x} = -\frac{1}{2} x \dd{t} + \dd{W}.
      \end{equation}
      For any initial data distribution $\mu \in \gP_2(\R^d)$ with compact support bounded by $R>0$, the backwards diffusion process is well posed. Suppose that there are two distributions $\mu_T, \nu_T$ at time $T$ that undergo the reverse diffusion process (with fixed initial reference measure $\mu$) up to time $t=\delta \in (0,T)$ to produce distributions $\mu_\delta, \nu_\delta$. There exists a (universal explicit) constant $C = C(\delta, T, R, d) \in (0,+\infty)$ such that if $f:\R^d \rightarrow \R^n$ is an $L$-Lipschitz function, then the difference in expectation satisfies
      \begin{equation}
            \|\mathbb{E}_{\mu_\delta}[f] - \mathbb{E}_{\nu_\delta}[f]\| \le CL \gW_2(\mu_T, \nu_T).
      \end{equation}
\end{theorem}

In dataset distillation terms, $f$ will typically be replaced by the gradient of a loss function. The above result suggests that a distilled \textit{image} dataset can be given by passing a distilled \textit{latent} dataset through the generative reverse SDE process. Moreover, when training a neural network on a distilled dataset given by optimal quantization, the gradients on the distilled dataset and full training dataset at each step will automatically be similar. This bypasses the heuristics needed in bi-level DD formulations, and avoids fine-tuning or generation-time guidance of the diffusion models.

\Cref{thm:main} combined with the asymptotic rates of \cite{graf2000foundations} gives convergence rates as the number of quantization points increases. We note that this can be further be combined with the convergence of optimal quantization rates of empirical measures such as \Cref{thm:quantiOfConvMeas} in \Cref{app:convOQ}. In particular, the next result shows that as the number of points increases, we have convergence to the underlying data distribution in image space, giving \textit{consistency}.

\begin{corollary}\label{cor:rate}
      Suppose $\mu \in \mathcal{P}_2(\R^d)$ has compact support and is diffused through either the Brownian motion or Ornstein--Uhlenbeck process up to time $T$ to produce marginal $\mu_T$. Let $\nu^{(K)}_T$ be optimal quantizers of $\mu_T$ at level $K$ for $K \in \mathbb{N}$. For fixed $\delta \in (0,T)$, let $\nu^{(K)}_\delta$ denote the corresponding backwards diffusion at time $T-\delta$. Then, for any $L$-Lipschitz function $f$ and as $K \rightarrow \infty$,
      \begin{equation}
            \|\mathbb{E}_{\mu_\delta}[f] - \mathbb{E}_{\nu^{(K)}_\delta}[f]\| = L \mathcal{O}(K^{-1/d}).
      \end{equation}
\end{corollary}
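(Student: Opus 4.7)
The plan is to chain two ingredients already developed earlier in the paper: the reverse-diffusion stability estimate of \Cref{thm:main}, which converts Wasserstein-2 closeness at the terminal time $T$ into closeness of Lipschitz expectations at time $\delta$; and the classical Zador--Graf--Luschgy asymptotic rate $\mathcal{O}(K^{-1/d})$ for optimal $L^2$ quantization, recalled just after \Cref{prop:CLVQ} from \cite{graf2000foundations}. Schematically, the goal is to show
\begin{equation*}
\|\mathbb{E}_{\mu_\delta}[f] - \mathbb{E}_{\nu^{(K)}_\delta}[f]\| \;\le\; C L \, \gW_2(\mu_T, \nu^{(K)}_T) \;=\; C L \cdot \mathcal{O}(K^{-1/d}),
\end{equation*}
where the first inequality is supplied by \Cref{thm:main} and the rate by Zador--Graf--Luschgy.

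First I would apply \Cref{thm:main} with terminal distributions $\mu_T$ and $\nu^{(K)}_T$, both propagated backwards under the reverse SDE whose drift is determined by the fixed reference measure $\mu$ (compactly supported, bounded by some $R>0$). The theorem applies for each fixed $K$, and its constant $C = C(\delta, T, R, d)$ does not depend on $K$, so we obtain $\|\mathbb{E}_{\mu_\delta}[f] - \mathbb{E}_{\nu^{(K)}_\delta}[f]\| \le C L \, \gW_2(\mu_T, \nu^{(K)}_T)$ uniformly in $K$. This step uses only well-posedness of the backward flow and the Lipschitz hypothesis on $f$, both granted by assumption.

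Next I would verify that the Zador-type rate applies to $\mu_T$. Since $\mu$ is compactly supported and the forward marginal $\mu_T$ is, in both the VESDE and the VPSDE cases, obtained by convolving a (possibly rescaled) copy of $\mu$ with a non-degenerate Gaussian of variance determined by $T$, $\mu_T$ admits a smooth strictly positive density on $\R^d$ and has finite moments of all orders. In particular $\mu_T \in \gP_{2+\eta}(\R^d)$ for every $\eta>0$ and charges no hyperplane, which are exactly the hypotheses of the Graf--Luschgy theorem. Hence $\gW_2(\mu_T, \nu^{(K)}_T) = \mathcal{O}(K^{-1/d})$ with a constant depending on the $L^{d/(d+2)}$-norm of the density of $\mu_T$ (and thus on $T$ and $\mu$) but independent of $K$. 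Combining the two displayed inequalities yields the stated rate.

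The only mildly subtle point, and the one I would check carefully, is that the Zador-rate constant is genuinely finite and $K$-uniform even though $\mu_T$ is not compactly supported: this reduces to observing that Gaussian convolution forces finite $(2+\eta)$-moments and a bounded $L^{d/(d+2)}$-density norm, both of which are standard. All remaining steps are bookkeeping of the already-stated estimates and can be absorbed into the implicit constant in the $\mathcal{O}(\cdot)$ notation for fixed $\mu$, $T$ and $\delta$.
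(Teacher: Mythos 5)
Your proposal is correct and follows the same route the paper takes: it chains \Cref{thm:main}, which gives $\|\mathbb{E}_{\mu_\delta}[f] - \mathbb{E}_{\nu_\delta^{(K)}}[f]\| \le CL\,\gW_2(\mu_T,\nu_T^{(K)})$ with $C$ independent of $K$, with the Graf--Luschgy/Zador rate $\gW_2(\mu_T,\nu_T^{(K)})=\mathcal{O}(K^{-1/d})$. Your extra verification that $\mu_T\in\gP_{2+\eta}(\R^d)$ (indeed, $\mu_T$ is a non-degenerate Gaussian convolution of a compactly supported measure, so it has a smooth density and all moments) is exactly the hypothesis needed for the Pierce-type bound the paper records as \Cref{prop:Zador}, and is a sound way to justify the rate even though $\mu_T$ is not compactly supported.
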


\begin{algorithm2e}[t]
      \caption{Dataset Distillation by Optimal Quantization (DDOQ)}\label{alg:ddoq}
      \KwData{Training data and labels $(\gT, \gL)$, pre-trained encoder-decoder pair $(\gE, \gD)$, text encoder $\tau$, latent diffusion model $\gU_t$, target IPC count $K$}
      \Comment{Step 1: encode}
      Initialize latent points $Z = \gE(\gT)$\;
      \Comment{Step 2: cluster}
      Compute and save $k$-means cluster centers $z_k^{(L)}$ and cluster counts $v_k^{(L)}$, $k=1,...,K,\, L \in \gL$\;
      Compute weights $w_k^{(L)} \gets  v_k^{(L)}/\sum_j v_j^{(L)}$\;
      \Comment{Step 3: decode}
      Compute class embeddings $\mathrm{emb} = \tau(L),\, L \in \gL$\;
      Compute and save class distilled images $\gS_L = \{x^{(L)}_k = \gD\circ\gU_t(z^{(L)}_k, \mathrm{emb}) \mid k=1,..., K\}$\;
      \KwResult{Distilled images $\gS = \bigcup_{L \in \gL} \gS_L$}
      \textbf{To train a new network $f_\theta$:}\\
      \KwData{Labels $y$ for training data $x \in \mathcal{S}$, loss function $\ell:(x,y,\theta ) \to \R$}
      \Comment{Step 4: Train new model (Validation)}
      Train network using loss function
          $\min_\theta \sum_{(x,y,w)} w \cdot \ell(x,y,\theta)$
\end{algorithm2e}


\begin{table}[t]
    \centering
    \caption{Wasserstein-2 distances of the distilled latents compared to the encoded training data on classes of ImageNet-1K. We observe the weighting drops the Wasserstein-2 distance significantly.}\label{tab:wassdist}
    \begin{tabular}{cccccc}\toprule
    \multicolumn{2}{c}{Test class} & 0 & 1 & 2 & Avg reduction\\ \midrule
         \multirow{2}{*}{IPC 10}& D\textsuperscript{4}M & 49.67 & 51.14 & 47.16 & \multirow{2}{*}{$-$15.7\%} \\
         & DDOQ & 42.97 & 42.71 & 39.14& \\\midrule
         \multirow{2}{*}{IPC 50}& D\textsuperscript{4}M & 47.34 & 49.10 & 45.28 &\multirow{2}{*}{$-$16.1\%}\\
         & DDOQ & 40.41 & 41.01 & 37.53&  \\\bottomrule
    \end{tabular}
\end{table}
\subsection{Proposed method: Dataset Distillation by Optimal Quantization}\label{ssec:DDOQpropose}

We have seen that clustering gives a consistent approximation to the latent distribution. Using the generative diffusion model and \Cref{thm:main}, we obtain a consistent approximation to the original data distribution in the image space. We now propose \emph{Dataset Distillation by Optimal Quantization} (DDOQ). We detail the steps in text below and summarize in \Cref{alg:ddoq}.

Suppose we are given a latent diffusion model (LDM), i.e. a (conditional) encoder-decoder pair and a diffusion model on the latent space \citep{rombach2022high}. Let $K$ be the target number of images per class. Constructing and using the distilled dataset consists of the following steps in sequence.
\begin{enumerate}
      \item \textbf{Encoding.} We use the encoder of the LDM to map training samples from the image space to the latent space, giving empirical samples from the SDE marginal $\mu_T$. This is done per-class in the context of classification.
      
      \item \textbf{Clustering.} We then use the CLVQ (mini-batch $k$-means) algorithm on these samples to compute the $K$ centroids and corresponding weights, as in \Cref{prop:CLVQ}. This gives an empirical distribution $\nu_T^{(K)}$ that approximates $\mu_T$. Equivalently, it consists of tuples of points and weights $(x_i,w_i)_{i=1}^K$, such that $\nu_T^{(K)} = \sum_i w_i \delta(x_i) \approx \mu_T$.
      
      \item \textbf{Decoding/Image synthesis.} Given the clustered centroids in the latent space, the generative part of the LDM is employed to reconstruct images. This comprises the image component of the distilled dataset. The weights of the latent points are assigned directly to the weights of the corresponding generated image.
      
      \item \textbf{Training new models.} Training a network $f_\theta$ from scratch requires: distilled images $x$, some corresponding labels $y$, and the weights $w$ to each image. For a loss function $\ell(x,y,\theta)$ such as cross-entropy or KL divergence, the loss for each sample is weighted by $w$. The complete loss function is given by 
      \begin{equation*}
          \min_\theta \sum_{(x,y,w)} w\cdot \ell(x,y,\theta).
      \end{equation*}
\end{enumerate}

{Within \Cref{fig:pp}, the encoder and decoder are given by a pre-trained encoder-decoder model, combined with a diffusion model in the latent space. } Compared with D\textsuperscript{4}M, we include {(automatically determined)} weights when training a new network in the final step, {indicated by the variable length arrows}. These are justified by considering the expectation of network gradients with respect to the training and distilled distributions. As seen in \Cref{tab:wassdist}, the inclusion of the weights significantly decreases the Wasserstein distance of the distilled points to the data distribution, when tested on classes of ImageNet-1K.

We note that there is flexibility in the choice of label $y$ when training a new model. For example, the soft-label synthesis of D\textsuperscript{4}M or RDED employs another pre-trained classifier $\Psi$, such as a ResNet. Using this, the soft-labels are given by $y=\Psi(x)$, as opposed to one-hot encodings of the corresponding classes.

\section{Experiments}\label{sec:experiments}
We compare with two different latent diffusion architectures, namely the original latent diffusion model utilizing UNets \citep{rombach2022high}, then the stronger diffusion transformer (DiT) architecture \citep{peebles2023scalable}. We first use the LDM to compare the pure algorithmic differences of DDOQ with D\textsuperscript{4}M and related baselines. Then, we use DiT to demonstrate the potential of DDOQ compared to newer state-of-the-art baselines.

\subsection{UNet backbone}
\begin{figure}[t]
      \includegraphics[width=\linewidth]{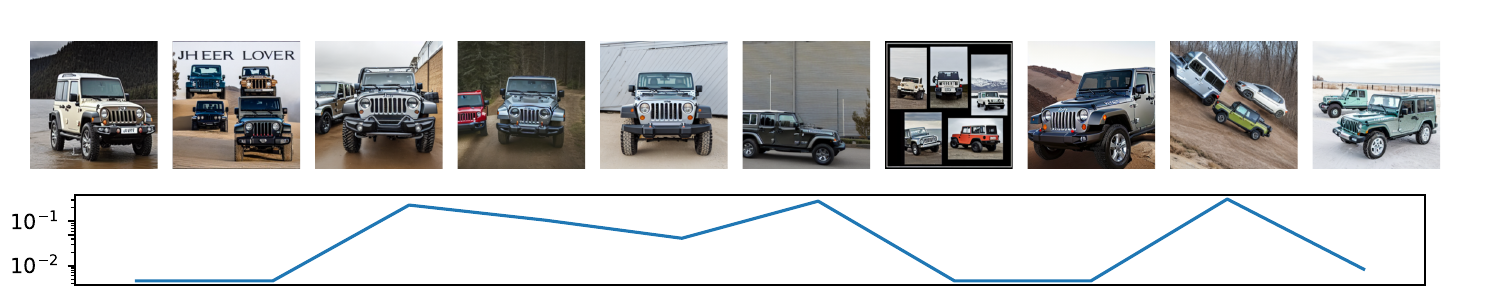}
      \caption{Example distilled images of the ``jeep'' class in ImageNet-1K along with their $k$-means weights below. There are little to no qualitative features that can be used to differentiate the low and high weighted images, mainly due to the high fidelity of the diffusion model. However, the weights are indicative of the distribution of the training data in the latent space of the diffusion model.}\label{fig:jeep}
\end{figure}

\begin{table}[t]
      \centering
      \caption{Comparison of top-1 classification performance on the ImageNet-1K dataset at various IPCs. We observe that the proposed DDOQ outperforms the clustering-based method D\textsuperscript{4}M, due to the addition of weights to the synthetic data. The maximum performance for all methods should be 69.8 as the soft labels are computed using a pre-trained ResNet-18 model. In particular, we achieve a 30\% reduction in error gap using ResNet-101 at IPC 200.}\label{tab:imagenet}
      \begin{adjustbox}{width=1\textwidth}
      \vspace{2px}
      \begin{tabular}{@{}ccccc|ccccc|c@{}}
      \toprule
      IPC                  & Method                  & ResNet-18                               & ResNet-50                               & ResNet-101                              & IPC                  & Method                  & ResNet-18                               & ResNet-50                              & ResNet-101                   &Full          \\ \midrule
      \multirow{5}{*}{10}  & TESLA                   & 7.7                                     & -                                       & -                                       & \multirow{5}{*}{50}  & SRe\textsuperscript{2}L & 46.8\textsubscript{$\pm 0.2$}                                    & 55.6\textsubscript{$\pm 0.3$}                                   & 60.8\textsubscript{$\pm 0.5$}                                   & \multirow{9}{*}{69.8}\\
                           & SRe\textsuperscript{2}L & 21.3\textsubscript{$\pm 0.6$}                                    & 28.4\textsubscript{$\pm 0.1$}                                    & 30.9\textsubscript{$\pm 0.1$}                                    &                      & CDA                     & 53.5                                    & 61.3                                   & 61.6                                  & \\
                           & RDED                    & \textbf{42.0}\textsubscript{$\pm 0.1$}  & -                                       & \textbf{48.3}\textsubscript{$\pm 1.0$}  &                      & RDED                    & \textbf{56.5}\textsubscript{$\pm 0.1$}  & -                                      & 61.2\textsubscript{$\pm 0.4$}        &  \\
                           & D\textsuperscript{4}M   & 27.9                                    & 33.5                                    & 34.2                                    &                      & D\textsuperscript{4}M   & 55.2                                    & \textit{62.4}                          & \textit{63.4}                         & \\
                           & DDOQ                    & \textit{33.1}\textsubscript{$\pm 0.60$} & \textit{34.4}\textsubscript{$\pm 0.99$} & \textit{36.7}\textsubscript{$\pm 0.80$} &                      & DDOQ                    & \textit{56.2}\textsubscript{$\pm 0.07$} & \textbf{62.5}\textsubscript{$\pm0.24$} & \textbf{63.6}\textsubscript{$\pm0.13$}& \\ \cmidrule{1-10}
      \multirow{4}{*}{100} & SRe\textsuperscript{2}L & 52.8\textsubscript{$\pm 0.3$}                                    & 61.0\textsubscript{$\pm 0.4$}                                    & 62.8\textsubscript{$\pm 0.2$}                                    & \multirow{4}{*}{200} & SRe\textsuperscript{2}L & 57.0\textsubscript{$\pm 0.4$}                                    & 64.6\textsubscript{$\pm 0.3$}                                   & 65.9\textsubscript{$\pm 0.3$}                                  & \\
                           & CDA                     & 58.0                                    & 65.1                                    & 65.9                                    &                      & CDA                     & \textit{63.3}                           & 67.6                          & \textit{68.4}                         & \\
                           & D\textsuperscript{4}M   & \textit{59.3}                           & \textit{65.4}                           & \textit{66.5}                           &                      & D\textsuperscript{4}M   & 62.6                                    & \textit{67.8}                          & 68.1                         & \\
                           & DDOQ                    & \textbf{60.1}\textsubscript{$\pm 0.15$} & \textbf{65.9}\textsubscript{$\pm 0.15$} & \textbf{66.7}\textsubscript{$\pm 0.06$} &                      & DDOQ                    & \textbf{63.4}\textsubscript{$\pm0.08$}  & \textbf{68.0}\textsubscript{$\pm0.05$} & \textbf{68.6}\textsubscript{$\pm0.08$}& \\ \bottomrule
      \end{tabular}
\end{adjustbox}
\end{table}

To validate the proposed DDOQ algorithm, we directly compare with the previous state-of-the-art disentangled methods D\textsuperscript{4}M \citep{su2024d} and RDED \citep{sun2024diversity} on the ImageNet-1K dataset. Baseline figures are reported as given in their respective works. For low IPC, we also report the TESLA method, which is a SOTA bi-level method based on MTT \citep{cui2023scaling}, but is unscalable past IPC=10. RDED achieves strong results for low IPC using its aggregated images and special training schedule, while CDA improves upon SRe\textsuperscript{2}L using time-varying augmentation. For consistency, soft labels are computed using the pre-trained PyTorch ResNet-18 model, and new ResNet-\{18,50,101\} models are trained using the distilled data. We provide a direct comparison of the distilled data performance in \Cref{tab:imagenet} for the IPCs $K \in \{10,50,100,200\}$. Variances for DDOQ are averaged over five models trained on the same distilled data.

We observe that while RDED is very powerful in the low IPC setting due to the patch-based distilled images, which effectively gives the information of 4 (down-sampled) images in one training sample. However, the gap quickly reduces for IPC 50, getting outperformed by the clustering-based D\textsuperscript{4}M and proposed DDOQ methods with more powerful models like ResNet-101. Results for IPC 100/200 are not available online for RDED and we omit them due to computational restriction. 

The proposed DDOQ algorithm is uniformly better than D\textsuperscript{4}M, with the most significant increase in the low IPC setting. Moreover, DDOQ surpasses all the compared SOTA disentangled DD methods for IPC 100 and 200. For a low number of quantization points, the gap in Wasserstein distance of the Wasserstein barycenter and the optimal quantizer to the data distribution may be large. As indicated in \Cref{thm:main}, a lower Wasserstein distance means more faithful gradient computations on the synthetic data, which may explain the higher performance with minimal algorithmic change, as well as implicitly allowing for gradient matching as in existing bi-level methods. 

\begin{table}[t]
\centering
\caption{Generalization performance of D\textsuperscript{4}M and the proposed DDOQ method for different soft-label teachers and student architectures at IPC 50. We observe that DDOQ has uniformly better cross-architecture generalization for convolutional teacher and student architectures, and slightly worse performance for student models using the transformer architecture Swin-T. }
\label{tab:generalization}
\begin{tabular}{@{}cccccc@{}}
\toprule
\multicolumn{2}{c}{\multirow{2}{*}{Teacher Network}}  & \multicolumn{4}{c}{Student Network}                 \\ \cmidrule(l){3-6} 
\multicolumn{2}{c}{}                                  & ResNet-18 & MobileNet-V2 & EfficientNet-B0 & Swin-T \\ \midrule
\multirow{2}{*}{ResNet-18}    & D\textsuperscript{4}M & 55.2      & 47.9         & 55.4            & \textbf{58.1}   \\
                              & DDOQ                  & \textbf{56.2}      &\textbf{52.1}         & \textbf{58.0}            & 57.4     \\ \midrule
\multirow{2}{*}{MobileNet-V2} & D\textsuperscript{4}M & 47.6      & 42.9         & 49.8            & \textbf{58.9}  \\
                              & DDOQ                  & \textbf{47.7}        & \textbf{45.6}         & \textbf{52.5}            & 56.3     \\ \midrule
\multirow{2}{*}{Swin-T}       & D\textsuperscript{4}M & 27.5      & 21.9         & 26.4            & \textbf{38.1}   \\
                              & DDOQ                  & \textbf{28.5}      & \textbf{24.1}         & \textbf{29.3}            & 36.0   \\ \bottomrule
\end{tabular}%
\end{table}

\Cref{tab:generalization} shows the generalization performance of the distilled latent points. The PyTorch pre-trained MobileNet-V2 and Swin-T networks are used to create the soft labels. We then evaluate the distilled images and soft labels with three convolutional architectures and the Swin-T transformer architecture. We observe that DDOQ is not only able to generalize to different model architectures, but also uniformly outperforms D\textsuperscript{4}M on all convolutional student architectures. The slightly worse performance of DDOQ when using a transformer architecture for the student may be due to more precise hyperparameter tuning requirements.

To illustrate the weights, \Cref{fig:jeep} plots ten example images from the ``jeep'' class when distilled using $K=10$ IPC. We observe that there is a very large variance in the weights ${v_k^{(L)}\big/\sum_{j=1}^K v_{j}^{(L)}}$, indicating the presence of strong clustering in the latent space. Nonetheless, there is no qualitative evidence that the weights indicate ``better or worse'' training examples, rather indicating the structure in the latent space. 


\subsection{DiT backbone}\label{ssec:dit}
To compare the potential of DDOQ, we use a stronger generative model, namely the diffusion transformer (DiT) \citep{peebles2023scalable}. This architecture achieves uniformly better sample quality compared to the LDM used in the previous section, in terms of Inception Score and Fr\'echet inception distance. We denote the method with DiT backbone as DDOQ-DiT.

We compare with the SOTA DD methods based on diffusion guidance, namely Minimax \citep{gu2024efficient} and Influence Guided Diffusion (IGD) \citep{chen2025influence}, which both use DiT. These methods guide the decoding process using some fine-tuning or batch statistics. In contrast, DDOQ-DiT directly modifies the initialization in the latent space. In addition, we compare with samples directly generated with DiT from random initializations, labelled `DiT'. We compare on ImageNet-1K, as well as the 10-class subsets ImageNette and ImageWoof. The baseline figures are taken from \cite{chen2025influence} which report higher numbers than \cite{gu2024efficient}. 

\Cref{tab:INsubsets} demonstrates that we significantly outperform the diffusion-guidance based methods on the full ImageNet-1K, as well as DDOQ-DiT with the UNet backbone. Moreover, we are competitive-with or better than the baselines on the 10-class subsets, namely outperforming in the low IPC setting and in the more difficult ImageWoof subset. Moreover, the stronger diffusion model significantly benefits the low IPC ImageNet-1K setting, increasing ResNet-18 test accuracy from 33.1 to 53.0. {{An ablation on the effect of different weights during training is given in \Cref{app:weightingAblation}.}}

\begin{table}[]
    \centering
    \caption{Comparisons on ImageWoof and ImageNette using the ResNetAP-10 architecture, and ImageNet-1K using ResNet-18. DDOQ-DiT outperforms the guided diffusion methods at low IPC and is competitive at higher IPC, namely outperforming the compared methods on ImageNet-1K.}\label{tab:INsubsets}
    \begin{tabular}{cccccccc}
    \toprule
        Dataset & IPC & DiT & DiT-IGD & Minimax & Minimax-IGD & DDOQ-DiT& Full\\ 
        \midrule
         \multirow{2}{*}{ImageWoof}& 10& 39.0\textsubscript{$\pm0.9$} &41.0\textsubscript{$\pm0.8$}&39.6\textsubscript{$\pm1.2$}&43.3\textsubscript{$\pm0.3$} &\textbf{48.8}\textsubscript{$\pm2.0$} & \multirow{2}{*}{87.2} \\
         &50&55.8\textsubscript{$\pm1.1$} &62.7\textsubscript{$\pm1.2$}&59.8\textsubscript{$\pm0.8$}&65.0\textsubscript{$\pm0.8$} &\textbf{65.4}\textsubscript{$\pm0.7$}& \\
    
    \midrule
         \multirow{2}{*}{ImageNette}& 10&62.8\textsubscript{$\pm0.8$} &66.5\textsubscript{$\pm1.1$}&63.2\textsubscript{$\pm1.0$}&65.3\textsubscript{$\pm1.1$} &\textbf{68.2}\textsubscript{$\pm0.9$}& \multirow{2}{*}{94.6}\\
         & 50&76.9\textsubscript{$\pm0.5$} &81.0\textsubscript{$\pm1.2$}&78.2\textsubscript{$\pm0.7$}&\textbf{82.3}\textsubscript{$\pm1.1$} &79.8\textsubscript{$\pm0.8$}& \\
         \midrule
         \multirow{2}{*}{ImageNet-1K} & 10& 39.6\textsubscript{$\pm 0.4$} &45.5\textsubscript{$\pm 0.5$}&44.3\textsubscript{$\pm 0.5$}&46.2\textsubscript{$\pm 0.6$} &\textbf{53.0}\textsubscript{$\pm 0.2$}& \multirow{2}{*}{69.8}  \\
         & 50&52.9\textsubscript{$\pm 0.6$} &59.8\textsubscript{$\pm 0.3$}&58.6\textsubscript{$\pm 0.3$}&60.3\textsubscript{$\pm 0.4$} &\textbf{62.7}\textsubscript{$\pm 0.1$}& \\
         \bottomrule
    \end{tabular}
\end{table}


\section{Conclusion}
This work proposes DDOQ, a dataset distillation method based on optimal quantization. Inspired by optimal quantization and Wasserstein theory, we theoretically demonstrate consistency of the distilled datasets in \Cref{thm:main} when using standard diffusion-based generative models to generate the synthetic data. Experiments show the proposed method is competitive or better than SOTA on large-scale ImageNet experiments. 

We have presented theoretical justification for {disentangled dataset distillation methods, which rely on clustering as the mechanism for approximating the underlying data distributions. More specifically, we justify the combination of a low-dimensional latent space and a diffusion model. Consistency or convergence of other dataset distillation frameworks such as bi-level methods and diffusion guidance are still open questions. }

Future work could include sharper bounds in \Cref{thm:main} that exploit the sub-Gaussianity of the diffused distributions or manifold hypothesis. Other interesting directions could be relating the weightings of the synthetic data to the hardness of learning the data, such as in \citep{joshi2023data}, further increasing the performance using different training regimes such as curriculum learning, or extending the theoretical analysis to inexact score matching. Possible empirical work could investigate correlations between the weights and influence \citep{pruthi2020estimating}.

\subsubsection*{Acknowledgments}
This work was done while HYT was on placement in GSK.ai. HYT acknowledges support from GSK and the Masason Foundation.

\bibliography{refs}
\bibliographystyle{iclr2026_conference}

\appendix

\section{Details on heuristic bi-level distillation}
\begin{enumerate}
      \item (\emph{Meta-learning.}) This uses the assumption that more data produces better results, replacing the risk matching objective with a risk minimization objective and removing the dependence on $\mathcal{T}$ \citep{wang2018dataset,deng2022remember}. The learning algorithm $\Phi_S = \Phi_S(\theta_0)$ with initial parameter $\theta_0$ is given by unrolling gradient steps $\theta_{t+1} = \theta_t - \eta \nabla \mathcal{L}_S (\theta_t)$. The distilled dataset minimizes the training loss as trained on the distilled set up to $T$ iterations:
      \begin{equation}
        \argmin_{S} \mathbb{E}_{\theta_0 \sim p_\theta} \left[\mathcal{L}_{\mathcal{T}}(\theta_T)\right].
      \end{equation}
      \item (\emph{Distribution matching.}) Inspired by reproducing kernel Hilbert spaces and the empirical approximations using random neural network features, \cite{zhao2023dataset} proposes an alternate distillation objective that is independent of target loss. The minimization objective is 
      \begin{equation}
        \argmin_{\mathcal{S}} \mathbb{E}_{\theta \sim p_\theta} \|\frac{1}{|\mathcal{T}|} \sum_{x \in \mathcal{T}} \psi_{\theta}(x) - \frac{1}{|\mathcal{S}|} \sum_{x \in \mathcal{S}} \psi_{\theta}(x)\|^2,
      \end{equation}
      where $\psi_\theta$ are randomly initialized neural networks. This can be intuitively interpreted as matching the (first moment of) neural network features over the synthetic data.
      \item (\emph{Trajectory matching.}) For a \emph{fixed network architecture}, this method aims to match the gradient information of the synthetic and true training datasets from different initializations \citep{cazenavette2022dataset}. The heuristic is that similar network parameters will give similar performance. In addition, the gradients are allowed to be accelerated by matching a small number of synthetic training steps with a large number of real data training steps: for $N \ll M$ steps, the Matching Training Trajectory (MTT) objective is (with abuse of notation):
      \begin{gather}
        \argmin_{\mathcal{S}} \mathbb{E}_{\theta_0 \sim p_\theta} \sum_{t=1}^{T-M} \frac{\|\theta_{t+N}^{\mathcal{S}} - \theta_{t+M}^{\mathcal{T}}\|^2}{\|\theta_{t+M}^{\mathcal{T}} - \theta_{t}^{\mathcal{T}}\|^2},\\ 
        \text{where }\ \theta_{t+1}^{\mathcal{T}} = \theta_t^{\mathcal{T}} - \eta \nabla \mathcal{L}_{\mathcal{T}} (\theta_t^{\mathcal{T}}),\, \theta_{t+i+1}^{\mathcal{S}} = \theta_{t+i}^{\mathcal{S}} - \eta \nabla \mathcal{L}_{\mathcal{S}} (\theta_{t+i}^{\mathcal{S}}),\, \theta_{t+0}^{\mathcal{S}} = \theta_{t}^{\mathcal{T}}.\notag
      \end{gather}
      Practical computational approximations include pre-training a large number of teacher networks from different initializations.
\end{enumerate}

\section{Additional definitions}
The Wasserstein-2 distance is defined as 
\begin{equation}\label{eq:wassDistance}
      \gW_2(\mu,\nu) = \left(\inf_{\gamma \in \Gamma(\mu,\nu)} \iint_{\R^d \times \R^d} \|x-y\|^2 \ \mathrm{d}\gamma(x,y) \right)^{1/2},
\end{equation}
where $\Gamma(\mu,\nu)$ denotes the set of all \emph{couplings}, i.e. joint probability measures on $\R^d \times \R^d$ with marginals $\mu$ and $\nu$.


\section{Optimal quantization}
\subsection{Additional background}
Optimal quantizers converge in Wasserstein distance at a rate $\Theta(K^{-1/d})$ to their respective distributions as the number of particles $K$ increases \citep{graf2000foundations}. Moreover, any limit point of the optimal quantizers is an optimal quantizer of the limiting distribution \citep{pollard1982quantization}. Moreover, it can be shown that if a sequence of distributions converges in Wasserstein distance $\mu_n \rightarrow \mu$, then so do the errors in quantization for a fixed quantization level \citep[Thm. 4]{liu2020convergence}.

\subsection{Proof of Proposition \ref{prop:OQWassersteinEquiv}}\label{app:propOQWass}
\begin{proposition}
      Suppose we have a quantization $\rvx = \{x_1,...,x_K\}$. Assume that the (probability) measure $\mu$ is null on the boundaries of the Voronoi clusters $\mu(\partial(C_i)) = 0$. Then the measure $\nu$ that minimizes the Wasserstein-2 distance \labelcref{eq:wassDistance} and satisfies $\supp \nu \subset \{x_1,...,x_K\}$ is $\nu_K = \sum_{i=1}^K \mu(C_i) \delta(x_i)$. Moreover, the optimal coupling is given by the projection onto the centroids.
\end{proposition}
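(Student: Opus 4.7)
The plan is to prove the two conclusions of the proposition—(i) the optimal measure supported in $\rvx$ is $\nu_K = \sum_i \mu(C_i)\delta(x_i)$, and (ii) the optimal coupling is the deterministic projection onto the nearest centroid—by reducing both to the distortion functional $\gG(\rvx)$, which serves as the pivotal quantity.

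First, I would establish a uniform lower bound: for any probability measure $\nu$ with $\supp \nu \subset \{x_1,\dots,x_K\}$ and any coupling $\gamma \in \Gamma(\mu,\nu)$, disintegration gives $\mathrm{d}\gamma(x,y) = \mathrm{d}\mu(x)\,k(x,\mathrm{d}y)$ for a probability kernel $k(x,\cdot)$ supported in $\rvx$. Since $y$ ranges over $\{x_1,\dots,x_K\}$, the integrand satisfies $\|x-y\|^2 \geq \min_j \|x-x_j\|^2$ pointwise, whence
\begin{equation*}
\int \|x-y\|^2\,\mathrm{d}\gamma(x,y) \;\geq\; \int \min_j \|x-x_j\|^2\,\mathrm{d}\mu(x) \;=\; \gG(\rvx).
\end{equation*}
Taking the infimum over $\gamma$ and $\nu$ shows $\gW_2(\mu,\nu)^2 \geq \gG(\rvx)$ for every admissible $\nu$.

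Next, I would match the bound by an explicit construction. Define the nearest-centroid projection $T:\R^d \to \rvx$ by $T(y) = x_i$ on the Voronoi cell $C_i$, breaking ties arbitrarily on $\bigcup_i \partial C_i$; since $\mu(\partial C_i)=0$ for all $i$ by hypothesis, $T$ is unambiguously defined $\mu$-a.e. The pushforward satisfies $T_\#\mu = \sum_i \mu(C_i)\delta(x_i) = \nu_K$, and the deterministic coupling $(\mathrm{id}, T)_\#\mu$ belongs to $\Gamma(\mu,\nu_K)$ with cost
\begin{equation*}
\int \|x-T(x)\|^2\,\mathrm{d}\mu(x) \;=\; \int \min_i \|x-x_i\|^2\,\mathrm{d}\mu(x) \;=\; \gG(\rvx).
\end{equation*}
This achieves the lower bound, so $\gW_2(\mu,\nu_K)^2 = \gG(\rvx)$ and $\nu_K$ is a minimizer realized by the projection coupling.

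Finally, for uniqueness (of both the optimizing $\nu$ and the optimal coupling), I would trace when the pointwise inequality $\|x-y\|^2 \geq \min_j\|x-x_j\|^2$ is tight. If some $\nu'$ and $\gamma' \in \Gamma(\mu,\nu')$ attain the lower bound, then equality must hold $\gamma'$-a.e., forcing $y$ to be a nearest centroid of $x$ for $\mu$-a.e.\ $x$. The null-boundary assumption $\mu(\partial C_i)=0$ makes the nearest centroid unique $\mu$-a.e., so the disintegration kernel must be $k(x,\cdot)=\delta_{T(x)}$, which pins down $\gamma' = (\mathrm{id},T)_\#\mu$ and consequently $\nu' = T_\#\mu = \nu_K$. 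The only technical point worth care is the handling of the Voronoi boundaries, but the hypothesis on $\mu$ renders this trivial; I do not anticipate a substantive obstacle elsewhere.
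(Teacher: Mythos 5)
Your proof is correct and follows essentially the same route as the paper's: both establish $\int\|x-y\|^2\,\mathrm{d}\gamma \geq \gG(\rvx)$ for any admissible coupling (the paper via the distance-to-set bound, you via disintegration, which are the same observation), then exhibit the projection coupling as attaining it. The one place you go further is the final uniqueness argument, tracing when the pointwise inequality is tight and using $\mu(\partial C_i)=0$ to conclude the kernel must be deterministic; the paper's proof stops at showing $\nu_K$ attains the lower bound and leaves the uniqueness implied by the phrasing "the measure $\nu$ that minimizes" unjustified, so your version is actually the more complete one.
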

\begin{proof}
      For any coupling $\gamma \in \Gamma(\nu, \mu)$ between $\nu$ and $\mu$, we certainly have that
      \begin{equation*}
            \iint \|x-y\|^2 \, \mathrm{d}{\gamma}(x,y) \ge \iint \dist(\rvx, y)^2 \,\mathrm{d}{\gamma}(x,y) = \int \dist(\rvx, y)^2 \mathrm{d}\mu(y).
      \end{equation*} 
      where the first inequality comes from definition of distance to a set ($\gamma$-a.s.) and the support condition on $\nu$, and the equality from the marginal property of couplings. The final term is attained when $\nu$ has the prescribed form: the coupling is $c(x_i, y) = \mathbf{1}_{y \in C_i}$ and the corresponding transport map is projection onto the set $\rvx$ (defined $\mu$-a.s. from the null condition). This shows a lower bound of \labelcref{eq:wassDistance} that is attained.
\end{proof}

\subsection{Proof of Proposition \ref{thm:weightedAvgWass}}\label{app:weightedAvgWass}
\begin{theorem}
      Let $f: \R^d \rightarrow \R$ be an $L$-Lipschitz function. For a probability measure $\mu \in \gP_2(\R^d)$ that assigns no mass to hyperplanes, and a quantization $\rvx = (x_1,..., x_k)$, let $\nu = \sum_{i=1}^K \mu(C_i) \delta(x_i)$ be the corresponding Wasserstein-optimal measure with support in $\rvx$, as in \Cref{prop:OQWassersteinEquiv}. The difference between the population risk $\mathbb{E}_\mu[f]$ and the weighted empirical risk $\mathbb{E}_\nu[f]$ is bounded as
      \begin{equation}
            |\mathbb{E}_\mu[f] - \mathbb{E}_\nu[f]| \le L \gG(\rvx)^{1/2}.
      \end{equation}
\end{theorem}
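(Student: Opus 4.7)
The plan is to bound the difference of expectations by the Wasserstein-2 distance via Kantorovich--Rubinstein--style duality, then identify this distance with the quadratic distortion using \Cref{prop:OQWassersteinEquiv}. Concretely, I would first invoke \Cref{prop:OQWassersteinEquiv} to obtain an explicit optimal coupling $\gamma \in \Gamma(\mu, \nu)$: namely, the deterministic coupling supported on the graph of the projection $\pi_{\rvx} : y \mapsto \argmin_{x_i} \|y - x_i\|$, which is $\mu$-almost everywhere well defined by the hypothesis that $\mu$ assigns no mass to the Voronoi boundaries (hence none to the hyperplanes separating them).

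Next, I would rewrite the difference of expectations against this coupling. Using the marginals of $\gamma$,
\begin{equation*}
\mathbb{E}_\mu[f] - \mathbb{E}_\nu[f] = \iint_{\R^d \times \R^d} \bigl(f(y) - f(x)\bigr)\,\mathrm{d}\gamma(x,y).
\end{equation*}
Applying the triangle inequality inside the integral and then the $L$-Lipschitz property of $f$,
\begin{equation*}
|\mathbb{E}_\mu[f] - \mathbb{E}_\nu[f]| \le \iint |f(y) - f(x)|\,\mathrm{d}\gamma(x,y) \le L \iint \|y - x\|\,\mathrm{d}\gamma(x,y).
\end{equation*}
Then I would apply the Cauchy--Schwarz (equivalently, Jensen's) inequality to pass from the $L^1$-type integrand to the $L^2$ one:
\begin{equation*}
L \iint \|y - x\|\,\mathrm{d}\gamma(x,y) \le L \left( \iint \|y - x\|^2 \,\mathrm{d}\gamma(x,y) \right)^{1/2}.
\end{equation*}

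Finally, I would identify the right-hand side with the distortion. By the choice of $\gamma$ as the projection coupling, $\|y - x\| = \min_i \|y - x_i\|$ for $\gamma$-almost every $(x,y)$, so the inner integral equals $\int_{\R^d} \min_i \|y - x_i\|^2 \,\mathrm{d}\mu(y) = \gG(\rvx)$. Substituting gives the claimed inequality $|\mathbb{E}_\mu[f] - \mathbb{E}_\nu[f]| \le L\,\gG(\rvx)^{1/2}$. The only mild subtlety is ensuring the projection map is well defined $\mu$-a.e., which is exactly what the no-mass-on-hyperplanes assumption secures; beyond this, the argument is a routine application of Kantorovich duality against the optimal quadratic transport plan, so I do not anticipate any substantial obstacle.
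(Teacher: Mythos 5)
Your argument is correct and is essentially the same as the paper's: the projection coupling you invoke is exactly the per--Voronoi-cell decomposition the paper writes out directly, and the final Cauchy--Schwarz step matches the paper's Hölder step verbatim. One minor terminological quibble: you are not actually using Kantorovich--Rubinstein duality (which would bound by $\gW_1$ via the dual sup over Lipschitz test functions); you are working with the primal coupling formulation and an explicitly constructed optimal plan, which is cleaner and is what the paper does too.
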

\begin{proof}
      Since $\mu$ assigns no mass to hyperplanes, we may decompose into Voronoi cells.
      \begin{align*}
            \left|\int_{\R^d} f \dd{\mu} - \int_{\R^d} f\dd{\nu}\right| &= \left|\sum_{i=1}^K \int_{C_i} f(x) - f(x_i) \dd{\mu(x)}\right| \\
            &\le \sum_{i=1}^K \int_{C_i} L \|x-x_i\| \dd{\mu(x)} \\
            &= L \int_{\R^d} \min_i \|x - x_i\| \dd{\mu(x)}\\
            & \le L \left(\int_{\R^d} \min_i \|x - x_i\|^2 \dd{\mu(x)}\right)^{1/2} \\
            & = L \gG(\rvx)^{1/2}.
      \end{align*}
      The first equality comes from definition of $\nu$, and the inequalities from the Lipschitz condition and H\"older's inequality respectively.
\end{proof}
For a given quantization $\rvx$ or quantization level $K$, the quadratic distortion thus satisfies respectively:
\begin{gather}
      \gG(\rvx)^{1/2} = \inf \left\{\gW_2(\nu, \mu) \mid \text{probability measures } \nu,\, \supp \nu \subset \rvx \right\}; \\
      \argmin_{\rvx,\, |\rvx|=K} \gG(\rvx) = \argmin \left\{\gW_2(\nu, \mu) \mid \text{probability measures } \nu,\, |\supp \nu| \le K \ \right\}.
\end{gather}

\subsection{Convergence of optimal quantization}\label{app:convOQ}

The following result gives a convergence rate, assuming slightly higher regularity of the underlying probability measure.
\begin{proposition}[{\cite{liu2020convergence}}]\label{prop:Zador}
      Let $\eta>0$, and suppose $\mu \in \gP_{2+\eta}(\R^d)$. There exists a universal constant $C_{d,\eta} \in (0, +\infty)$ such that for every quantization level,
      \begin{equation}
            e^*_{K,\mu} \le C_{d, \eta}\cdot  \sigma_{2+\eta}(\mu) K^{-1/d},
      \end{equation}
      where $\sigma_r(\mu)= \min_{a \in \R^d} \mathbb{E}_\mu [\|x-a\|^r]^{1/r}$.
\end{proposition}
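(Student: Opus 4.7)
The plan is to prove the non-asymptotic Zador/Pierce bound by explicitly constructing a (suboptimal) quantizer with $K$ points whose root-quadratic-distortion satisfies the stated inequality; since $e^*_{K,\mu}$ is the infimum of $\gG_{K,\mu}(\rvx)^{1/2}$ over all $K$-point grids, any such construction is automatically an upper bound. I would first reduce by invariances: translating coordinates so that the minimizer $a^\ast \in \argmin_{a} \mathbb{E}_\mu[\|x-a\|^{2+\eta}]$ is the origin makes $\mathbb{E}_\mu[\|x\|^{2+\eta}]^{1/(2+\eta)} = \sigma_{2+\eta}(\mu)$. Then, because $\gG$ and $\sigma_{2+\eta}$ both scale homogeneously under $x \mapsto \lambda x$, I may normalize $\sigma_{2+\eta}(\mu) = 1$ and recover the general bound by undoing the scaling at the end.

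Next I would perform a dyadic shell decomposition of $\R^d$: set $B_0 = [-r,r]^d$ and $B_k = [-2^k r, 2^k r]^d \setminus [-2^{k-1} r, 2^{k-1} r]^d$ for $k \ge 1$, with $r$ a fixed constant (say $r=1$). Given an allocation $(n_k)_{k \ge 0}$ of nonnegative integers with $\sum_k n_k \le K$, I place a uniform cubic sub-grid of $n_k$ points inside $B_k$; the resulting grid $\rvx$ is my candidate quantizer. The contribution of shell $B_k$ to the quadratic distortion is controlled by the covering radius of a cube by $n_k$ equidistant points, giving
\begin{equation*}
\int_{B_k} \min_i \|x - x_i\|^2 \, \mathrm{d}\mu(x) \;\le\; c_d (2^k r)^2 n_k^{-2/d} \, \mu(B_k),
\end{equation*}
with $c_d$ a purely dimensional constant. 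Markov's inequality applied to the $(2+\eta)$-moment bounds the tail mass as $\mu(B_k) \le (2^{k-1} r)^{-(2+\eta)} \sigma_{2+\eta}(\mu)^{2+\eta}$, so each shell contributes at most $C_d \, r^{-\eta} 2^{-k\eta} n_k^{-2/d}$ under the normalization $\sigma_{2+\eta}(\mu) = 1$.

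Finally I would optimize the allocation. Minimizing $\sum_k C_d r^{-\eta} 2^{-k\eta} n_k^{-2/d}$ subject to $\sum_k n_k = K$ by Lagrange multipliers yields the optimal profile $n_k \propto 2^{-k \eta d/(d+2)}$. Summing the two resulting geometric series---both of which converge precisely because $\eta > 0$ forces the ratio $2^{-\eta d/(d+2)} < 1$---gives a bound of the form $\gG_{K,\mu}(\rvx) \le C_{d,\eta} K^{-2/d}$ whose constant depends only on $d, \eta$ through $c_d$ and the sum $Z = \sum_k 2^{-k\eta d/(d+2)}$. Taking square roots, reinstating the normalization by $\sigma_{2+\eta}(\mu)$, and rounding the real-valued allocations $n_k$ to integers (at the cost of a harmless absolute constant) delivers the claimed inequality.

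The main obstacle, and the reason the proof is more delicate than a one-line Markov argument, is obtaining the sharp rate $K^{-1/d}$ rather than the weaker rate a naive split would give. Partitioning $\R^d$ into a single ball of radius $R$ plus its complement, placing all $K$ points inside, and balancing the in-ball grid error against the $(2+\eta)$-Markov tail only produces $e^*_{K,\mu}^2 \lesssim K^{-2\eta/(d(2+\eta))}$; the geometric shell decomposition together with the Lagrange-optimal allocation $n_k \propto 2^{-k\eta d/(d+2)}$ is what restores the correct exponent. Managing the rounding of $n_k$ to integers in shells where the optimum is less than one---i.e.\ truncating the partition at the largest $k$ with $n_k \ge 1$ and absorbing the residual tail through a separate Markov bound on the discarded shells---is the only other point requiring care.
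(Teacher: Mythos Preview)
The paper does not prove this proposition at all: it is quoted verbatim as a background result from \cite{liu2020convergence} (ultimately the non-asymptotic Pierce--Zador inequality, cf.\ Graf--Luschgy), so there is nothing to compare your argument against. Your sketch is the standard proof of this bound---centering and scaling to normalize $\sigma_{2+\eta}(\mu)$, dyadic shell decomposition, product-grid covering inside each shell, Markov control of the shell masses, and Lagrange allocation $n_k \propto 2^{-k\eta d/(d+2)}$---and is correct as written, including the observation that convergence of the allocation series is exactly what requires $\eta>0$ and the caveat about integer rounding and tail truncation.
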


The following non-asymptotic result considers the convergence of optimal quantizers for a sequence of probability measures, converging in the Wasserstein sense.
\begin{theorem}[{\citealt[Thm. 4]{liu2020convergence}}]\label{thm:quantiOfConvMeas}
      Fix a quantization level $K \ge 1$. Let $\mu_n, \mu \in \gP_2(\R^d)$ with support having at least $K$ points, such that $\gW_2(\mu_n, \mu) \rightarrow 0$ as $n\rightarrow \infty$. For each $n \in \mathbb{N}$, let $\rvx^{(n)}$ be an optimal quantizer of $\mu_n$. Then
      \begin{equation}
            \gG_{K,\mu}(\rvx^{(n)}) - \inf_{\rvx} \gG_{K,\mu}(\rvx) \le 4 e^*_{K, \mu}\gW_2(\mu_n, \mu) + 4\gW_2^2(\mu_n, \mu),
      \end{equation}
      where $e^*_{K,\mu} = [\inf_{\rvx} \gG_{K,\mu}(\rvx)]^{1/2}$ is the optimal error.
\end{theorem}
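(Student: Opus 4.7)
The plan is to exploit the Wasserstein characterization of the distortion from \Cref{prop:OQWassersteinEquiv}, which identifies $\gG_{K,\mu}(\rvx)^{1/2}$ with $\inf\{\gW_2(\nu,\mu) : \supp\nu \subset \rvx\}$, and then use the triangle inequality for $\gW_2$ to propagate perturbations of the underlying measure through the quantization functional. Everything else is algebraic manipulation.

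First, I would establish a uniform $1$-Lipschitz stability estimate: for every fixed grid $\rvx$ with $|\rvx| \le K$ and every pair $\mu, \mu_n \in \gP_2(\R^d)$,
\begin{equation*}
    \bigl|\gG_{K,\mu}(\rvx)^{1/2} - \gG_{K,\mu_n}(\rvx)^{1/2}\bigr| \le \gW_2(\mu, \mu_n).
\end{equation*}
This is immediate from \Cref{prop:OQWassersteinEquiv}: writing $g_{\rvx}(\sigma) := \inf\{\gW_2(\nu,\sigma) : \supp\nu \subset \rvx\}$, the triangle inequality gives $g_{\rvx}(\mu) \le \gW_2(\nu,\mu_n) + \gW_2(\mu_n,\mu)$ for any admissible $\nu$, and taking the infimum yields $g_{\rvx}(\mu) \le g_{\rvx}(\mu_n) + \gW_2(\mu,\mu_n)$. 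Swapping the roles of $\mu$ and $\mu_n$ gives the matching lower bound.

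Next, I would chain this stability estimate twice. Applying it at $\rvx = \rvx^{(n)}$ gives
\begin{equation*}
    \gG_{K,\mu}(\rvx^{(n)})^{1/2} \le \gG_{K,\mu_n}(\rvx^{(n)})^{1/2} + \gW_2(\mu,\mu_n) = e^*_{K,\mu_n} + \gW_2(\mu,\mu_n),
\end{equation*}
using that $\rvx^{(n)}$ is optimal for $\mu_n$. Applying it at an optimal grid $\rvx^*$ for $\mu$ gives $e^*_{K,\mu_n} \le \gG_{K,\mu_n}(\rvx^*)^{1/2} \le e^*_{K,\mu} + \gW_2(\mu,\mu_n)$. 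Combining,
\begin{equation*}
    \gG_{K,\mu}(\rvx^{(n)})^{1/2} \le e^*_{K,\mu} + 2\gW_2(\mu,\mu_n).
\end{equation*}

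Finally, squaring this inequality and subtracting $(e^*_{K,\mu})^2 = \inf_{\rvx} \gG_{K,\mu}(\rvx)$ produces
\begin{equation*}
    \gG_{K,\mu}(\rvx^{(n)}) - \inf_{\rvx} \gG_{K,\mu}(\rvx) \le 4\, e^*_{K,\mu}\, \gW_2(\mu,\mu_n) + 4\,\gW_2^2(\mu,\mu_n),
\end{equation*}
which is exactly the claimed bound. There is no genuine obstacle here: the only subtle point is recognizing that the square-root distortion is a $1$-Lipschitz function of the reference measure under $\gW_2$ thanks to \Cref{prop:OQWassersteinEquiv}, after which the result is two triangle inequalities and an expansion of a binomial. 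The hypothesis that $\supp \mu, \supp \mu_n$ contain at least $K$ points is used only implicitly to ensure that all infima are attained and $e^*_{K,\mu}$ is well-defined in a nontrivial way.
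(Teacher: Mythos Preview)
The paper does not supply its own proof of this theorem: it is stated in the appendix as a citation of \cite[Thm.~4]{liu2020convergence} and used as a black box. Your argument is correct and is essentially the same route taken in the cited reference, namely showing that $\gG_{K,\cdot}(\rvx)^{1/2}$ is $1$-Lipschitz in $\gW_2$ via the Wasserstein characterization of distortion, chaining the triangle inequality through an optimal grid for $\mu$ and the optimal grid $\rvx^{(n)}$ for $\mu_n$, and squaring.
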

\begin{remark}
    Convergence to an optimal quantizer is only guaranteed in the case of a log-concave distribution in one dimension \citep{kieffer1982exponential,liu2020convergence}. In higher dimensions, convergence to a stationary (but not optimal) grid is possible \citep{pages2016pointwise,pages2015introduction}.
\end{remark}
\subsection{Gradient interpretation of CLVQ}\label{app:CLVQ}

\begin{proposition}[Differentiability of distortion {\citealt[Prop. 3.1]{pages2015introduction}}]
      Let $\rvx = (x_1,...,x_K) \in (\R^d)^K$ be such that the $x_i$ are pairwise distinct, and assume that $\mu(\partial C_i) = 0$. Then, the quadratic distortion is differentiable with derivative
      \begin{equation}\label{eq:distortionGrad}
            \nabla \gG(\rvx) = \left(2\int_{C_i(x)}(x_i - \xi) \,\mu(\mathrm{d}\xi) \right)_{i=1,...,K},
      \end{equation}
      i.e., the gradient for quantization point $x_i$ points away from the $\mu$-centroid of its Voronoi cell.
    \end{proposition}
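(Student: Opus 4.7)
The plan is to work directly with the difference quotient and interchange limit with integration via dominated convergence, using the hypothesis $\mu(\partial C_i)=0$ to handle the non-smoothness of the integrand $F(\rvx,\xi) := \min_{j}\|x_j-\xi\|^2$ on a $\mu$-null set. Fix a direction $\rvu = (u_1,\dots,u_K)\in(\R^d)^K$ and consider the perturbation $\rvx + h\rvu$ for $h\in\R$ small. The goal is to show
\[
\lim_{h\to 0}\frac{\gG(\rvx+h\rvu)-\gG(\rvx)}{h} = \sum_{i=1}^K 2\int_{C_i(\rvx)}\langle u_i,\, x_i-\xi\rangle\,\mu(\mathrm{d}\xi),
\]
which is linear and continuous in $\rvu$, and hence identifies $\nabla\gG(\rvx)$ with the claimed expression.

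First I would establish a pointwise directional derivative of the integrand. For $\xi$ in the \emph{interior} of some Voronoi cell $C_i(\rvx)$, the unique nearest centroid is $x_i$, and by continuity of the map $\rvx\mapsto x_j$ together with the strict inequality $\|x_i-\xi\|<\|x_j-\xi\|$ for $j\neq i$, the point $\xi$ also lies in $C_i(\rvx+h\rvu)$ for all sufficiently small $|h|$ (how small depends on $\xi$). On this regime
\[
F(\rvx+h\rvu,\xi) - F(\rvx,\xi) = 2h\,\langle u_i,\, x_i-\xi\rangle + h^2\|u_i\|^2,
\]
so the pointwise derivative is $2\langle u_i, x_i-\xi\rangle$. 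The assumption $\mu(\partial C_i)=0$ (together with the fact that the $x_i$ are distinct, so only finitely many hyperplane boundaries are relevant) ensures that $\bigcup_i \mathrm{int}\,C_i(\rvx)$ has full $\mu$-measure, so this pointwise derivative exists $\mu$-a.e.

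Second, I would dominate the difference quotient uniformly in $h$. The key bound is that for any $\xi$, even when $\xi$ crosses from cell $C_i(\rvx)$ into a different cell $C_j(\rvx+h\rvu)$, the sub-optimality relations $\|x_i-\xi\|^2\le\|x_j-\xi\|^2$ and $\|x_j+hu_j-\xi\|^2\le\|x_i+hu_i-\xi\|^2$ sandwich the difference to give
\[
\bigl|F(\rvx+h\rvu,\xi) - F(\rvx,\xi)\bigr| \le |h|\cdot 2\max_k\|u_k\|\cdot \max_k\|x_k-\xi\| + h^2\max_k\|u_k\|^2.
\]
Dividing by $|h|$ yields a dominating function of the form $C_1(1+\|\xi\|)+C_2|h|$, which is $\mu$-integrable because $\mu\in\gP_2(\R^d)$ (in fact finite first moment suffices). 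Dominated convergence then gives the desired directional derivative formula.

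The main obstacle will be making the pointwise argument rigorous on the $\mu$-full-measure interior set: one has to verify that although the threshold ``$|h|$ small enough'' depends on $\xi$, the pointwise limit still holds $\mu$-a.e., and that the envelope-type cancellation on the boundary does not leak into the bulk integral. That is exactly where $\mu(\partial C_i)=0$ is used. Once directional differentiability is obtained with a linear dependence on $\rvu$, I would conclude Fréchet differentiability by either (i) noting that the partial derivatives $\partial_{x_i}\gG(\rvx) = 2\int_{C_i(\rvx)}(x_i-\xi)\,\mu(\mathrm{d}\xi)$ are continuous in $\rvx$ on the open set of distinct configurations with $\mu(\partial C_i)=0$ (again via dominated convergence applied to the indicator of $C_i$), or (ii) invoking that $\gG$ is locally Lipschitz and any locally Lipschitz function with linear continuous directional derivatives is Fréchet differentiable.
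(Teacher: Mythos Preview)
The paper does not supply its own proof of this proposition; it is quoted verbatim from \cite[Prop.~3.1]{pages2015introduction} and used as a black box. Your approach---differentiating under the integral sign via dominated convergence, with the pointwise derivative computed on Voronoi-cell interiors and the hypothesis $\mu(\partial C_i)=0$ disposing of the boundary---is the standard one, and is essentially what the cited reference does.

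One caution on your closing paragraph. Route (ii), ``locally Lipschitz with linear continuous directional derivatives implies Fr\'echet differentiable'', is not a valid general principle: a Lipschitz function on $\R^n$ can be Gateaux differentiable at a point with a linear derivative and still fail to be Fr\'echet differentiable there. Route (i) also has a wrinkle: continuity of the partials $\rvx'\mapsto \int_{C_i(\rvx')}(x_i'-\xi)\,\mu(\mathrm{d}\xi)$ in a \emph{neighbourhood} of $\rvx$ requires $\mu(\partial C_i(\rvx'))=0$ for all nearby $\rvx'$, which is strictly stronger than the single-point hypothesis in the statement (it follows if $\mu$ charges no hyperplane, but that is an extra assumption). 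The cleanest fix is to bypass both routes and run your dominated-convergence argument not along rays $h\rvu$ but along an arbitrary sequence $\rvh_n\to 0$ in $(\R^d)^K$: for $\xi\in\mathrm{int}\,C_i(\rvx)$ and $n$ large enough, $F(\rvx+\rvh_n,\xi)-F(\rvx,\xi)-\langle\nabla_{\rvx}F(\rvx,\xi),\rvh_n\rangle=\|(\rvh_n)_i\|^2\le\|\rvh_n\|^2$, and the same envelope bound you already derived dominates the quotient by $\|\rvh_n\|$. That yields Fr\'echet differentiability at $\rvx$ directly, without appealing to anything about neighbouring configurations.
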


The gradient step for some step-sizes $\gamma_k \in (0,1)$ reads
\begin{equation}
      \rvx^{(k+1)} = \rvx^{(k)} - \gamma_k \nabla \gG(\rvx^{(k)}),\quad \rvx^{(0)} \in \mathrm{Hull}(\supp \mu)^K,
\end{equation}
where $\mathrm{Hull}$ denotes the convex hull. Recalling that the gradient \labelcref{eq:distortionGrad} is a $\mu$-expectation over $C_i(x)$, the corresponding stochastic Monte Carlo version of the above gradient descent yields the CLVQ algorithm \labelcref{alg:CLVQ}:
\begin{equation}
      \rvx^{(k+1)} = \rvx^{(k)} - \gamma_k \left(\mathbf{1}_{X_k \in C_i^{(k)}} x_i^{(k)} - X_k\right)_{1\le i \le K},\quad X_k \sim \mu.
\end{equation}
Note that the computation of this requires the ability to sample from $\mu$, as well as being able to compute the nearest neighbor of $X_k$ to the quantization set $\rvx$ (equivalent to the inclusion $X_k \in C_i^{(K)}$). 

\begin{algorithm2e}[t]
      \caption{CLVQ}\label{alg:CLVQ} 
      \KwData{initial cluster centers $x_1^{(0)},..., x_K^{(0)}$, step-sizes $(\gamma_i)_{i \ge 0}$, $i \gets 0$}
      Initialize weights $\rvw = (w_1,...,w_K) = (1/K,...,1/K)$\;
      \While{not converged}{
            Sample $X_i \sim \mu$\;
            Select ``winner'' $k_{\textrm{win}} \in \argmin_{1\le k \le K} \|X_i - x_k^{(i)}\|$\;
            Update $x_k^{(i+1)} \gets (1-\gamma_i) x_k^{(i)} + \gamma_i X_i$ if $k = k_{\mathrm{win}}$, otherwise $x_k^{(i+1)} \gets x_k^{(i)}$ \;
            Update weights $w_k \gets (1-\gamma_i) w_k + \gamma_i \mathbf{1}_{k = k_{\mathrm{win}}}$\;
            $i \gets i+1$\;
      }
      \KwResult{quantization $\nu_K = \sum_{k=1}^K w_i \delta(x_k^*)$}
\end{algorithm2e}

\subsection{Lloyd I algorithm}\label{app:lloyd}
\textbf{Lloyd I.} This consists of iteratively updating the centroids with the $\mu$-centroids, given by the $\mu$-average of the Voronoi cells. Clearly, if this algorithm converges, then the centroids are equal to the $\mu$-centroids and the grid is stationary. This is more commonly known as the $k$-means clustering algorithm, employed in common numerical software packages such as \texttt{scikit-learn} \citep{scikit-learn}. Convergence of the Lloyd-I algorithm can be found in e.g. \citep{pages2016pointwise}.

\begin{algorithm2e}
      \caption{Lloyd I ($k$-means)}\label{alg:lloydI}
      \KwData{Probability distribution $\mu$ with finite first moment, initial cluster centers $x_1^{(0)},..., x_K^{(0)}$}
      $k \gets 0$\;
      \While{not converged}{
            Compute Voronoi cells $C_i^{(k)} = \{y\in \R^d \mid \|x_i^{(k)}-y\| = \min_j \|x_j^{(k)}-y\|\}$\;
            Replace cluster centers with $\mu$-centroids $x_i^{(k+1)} \gets (\int_{C_i^{(k)}} x \mu(\mathrm{d}x))/{\mu(C_i^{(k)})}$\;
            $k \gets k+1$\;
      }
\end{algorithm2e}

\section{Background on well-posed diffusions}\label{app:SDEwellposed}
Suppose that the true data distribution on the image space is given by $\mu \in \mathcal{P}(\R^d)$, assumed to have bounded support. Then, by the H\"ormander condition, the law of a random variable $(X_t)_{t \ge 0}$ evolving under either the VPSDE or the VESDE will admit a density $p_t(x)$ for all $t>0$ with respect to the Lebesgue measure, that is smooth with respect to both $x$ and $t$ \citep{hormander1967hypoelliptic}. Using the following proposition, we have well-definedness of the backward SDE \citep{anderson1982reverse,haussmann1986time}. 

\begin{proposition}\label{prop:SDEWellPosed}
      For the forward SDE \labelcref{eq:forwardSDE}, assume that there exists some $K>0$ such that
      \begin{enumerate}
            \item $f(x,t), g(t)$ are measurable, and $f$ is uniformly Lipschitz: $\|f(x,t) - f(y,t)\| \le K \|x-y\|$ for all $x,y \in \R^d$;
            \item $\|f(x,t)\| + |g(t)| \le K(1+\|x\|)$;
            \item The solution $X_t$ of \labelcref{eq:forwardSDE} has a $\mathcal{C}^1$ density $p_t(x)$ for all $t>0$, and 
            \begin{equation*}
                  \int_{t_0}^T \int_{\|x\|<R} \|p_t\|^2 + \|\nabla_x p_t(x)\|^2  < +\infty \quad \forall t_0 \in (0,T],\, R>0;
            \end{equation*}
            \item The score $\nabla \log p_t(x)$ is locally Lipschitz on $(0,T] \times \R^d$.
      \end{enumerate}
      Then the reverse process $X_{T-t}$ is a solution of the \labelcref{eq:backwardSDE}, and moreover, the solutions of \labelcref{eq:backwardSDE} are unique in law.
\end{proposition}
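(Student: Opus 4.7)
The plan is to obtain existence from the Haussmann–Pardoux time-reversal theorem \cite{haussmann1986time}, and to obtain uniqueness in law for the reverse SDE by a localization argument for SDEs with locally Lipschitz coefficients plus Yamada–Watanabe.

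First I would verify that conditions (1)–(4) match the hypotheses of the Haussmann–Pardoux theorem. Conditions (1) and (2) give global Lipschitz continuity and linear growth of $f$ together with linear growth of $g$, so the forward SDE \labelcref{eq:forwardSDE} admits a pathwise-unique strong solution $X_t$ with finite moments of every order. Condition (3) provides a $\mathcal{C}^1$ marginal density $p_t$ on $(0,T]\times\R^d$ whose squared magnitude and squared spatial gradient are locally integrable in $(t,x)$; this is exactly the local Sobolev-type regularity invoked in their hypothesis. Applying their theorem then yields that $\tilde X_t := X_{T-t}$ is a continuous semimartingale solving the reverse SDE \labelcref{eq:backwardSDE} with drift $b(x,t) = -f(x,T-t) + g(T-t)^2 \nabla \log p_{T-t}(x)$ and Brownian motion $\bar W$.

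Next, for uniqueness in law, I would localize. By conditions (1) and (4), the reverse drift $b(x,t)$ is locally Lipschitz on $[0,T)\times\R^d$ in reverse time, since $f$ is globally Lipschitz and $\nabla\log p_{T-t}$ is locally Lipschitz on the open interval. For each $\epsilon>0$ and each $n\in\mathbb{N}$, on the stopped interval $[0,(T-\epsilon)\wedge \tau_n]$, where $\tau_n$ is the first exit time of the spatial ball of radius $n$, the drift is globally Lipschitz and bounded. Itô's classical existence/uniqueness theorem then gives pathwise uniqueness on each such stopped interval, and pathwise uniqueness implies uniqueness in law by Yamada–Watanabe. Using the moment bounds inherited from the Haussmann–Pardoux solution (whose law is the reversal of the forward process, which has bounded moments by condition (2)) rules out explosion, so I can send $n\to\infty$ and then $\epsilon\to 0^+$ to obtain uniqueness in law on $[0,T)$ in reverse time, equivalently on $(0,T]$ in forward time.

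The main obstacle I anticipate is the behavior of $\nabla \log p_t$ as $t\to 0^+$, where the score can blow up for singular or degenerate initializations; condition (4) deliberately excludes the endpoint and the downstream \Cref{thm:main} likewise imposes a strictly positive stopping time $\delta>0$, so the uniqueness conclusion is for reverse processes initialized at any $\delta\in(0,T]$ rather than at the singular boundary. A secondary care point is that condition (3) controls $\|\nabla p_t\|$ rather than $\|\nabla\log p_t\|$, so along the way I would argue that the reverse paths almost surely avoid the zero set $\{p_t=0\}$ (which carries no forward marginal mass), ensuring that the division by $p_t$ implicit in forming $\nabla\log p_t$ is harmless on the support of the solution law.
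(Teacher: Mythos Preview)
The paper does not actually prove this proposition; it is stated in Appendix~\ref{app:SDEwellposed} as a background result and attributed to the time-reversal literature \cite{anderson1982reverse,haussmann1986time}. Your plan is therefore entirely consistent with the paper's treatment: invoking Haussmann--Pardoux for existence is exactly what the citation points to, and your localization-plus-Yamada--Watanabe argument for uniqueness is the standard complement. One small tightening: your non-explosion step appeals to moment bounds of the Haussmann--Pardoux solution, but what you really need is that pathwise uniqueness up to explosion together with the existence of \emph{one} non-exploding solution forces every solution to be non-exploding (since any other solution must coincide pathwise with the non-exploding one up to its own explosion time); phrasing it that way closes the loop without needing a priori moment bounds on an arbitrary weak solution.
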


Now given that the backwards SDE is indeed a diffusion, the data processing inequality uses the Markov property and states that the divergence after diffusion is less than the divergence before diffusion \citep{liese2006divergences}. This is summarized in \cite[Thm. 1]{pidstrigach2022score}. 
\begin{theorem}
      Denote the initial data distribution by $\mu_0 = \mu$, and let $\mu_T$ be the distribution of a random variable $X_t$ satisfying the forward SDE \labelcref{eq:forwardSDE} on $[0,T]$. Assume the above assumptions, and let $Y_t$ satisfy the backwards SDE \labelcref{eq:backwardSDE} on $[0,T]$ with terminal condition $Y_T \sim \nu_T$. Denote by $\mu_t$ and $\nu_t$ the marginal distributions at time $t\in[0,T]$ of $X_t$ and $Y_t$, and assume that $\nu_T \ll \mu_T$. Then:
      \begin{enumerate}
            \item The limit $Y_0 \coloneqq \lim_{t \rightarrow 0^+} Y_t$ exists a.s., with distribution $\nu_0 \ll \mu_0$.
            \item For any $f$-divergence $D_f$, 
            \begin{equation}
                  D_f(\mu_0, \nu_0) \le D_f(\mu_T, \nu_T) \quad \text{and} \quad D_f(\nu_0, \mu_0) \le D_f(\nu_T, \mu_T).
            \end{equation}
      \end{enumerate}
\end{theorem}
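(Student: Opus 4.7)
The plan is to combine the well-posedness of the reverse SDE from \Cref{prop:SDEWellPosed} with the classical data processing inequality for $f$-divergences. Under the stated hypotheses, the backwards dynamics define a Markov transition kernel $Q_{T,t}$ for each $t \in (0, T]$ mapping terminal data at time $T$ to marginals at time $t$. The key structural fact is that when the backwards SDE is initialized with $Y_T \sim \mu_T$, time-reversal of the forward SDE guarantees the backwards marginals recover the forward ones, so $\mu_T Q_{T,t} = \mu_t$. Correspondingly, initializing with $Y_T \sim \nu_T$ gives $\nu_T Q_{T,t} = \nu_t$. Thus both $\mu_t$ and $\nu_t$ arise from applying the \emph{same} Markov semigroup to the two terminal distributions.

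Part 2 restricted to fixed $t > 0$ is then immediate from the standard data processing inequality for $f$-divergences: for any convex $f$ with $f(1) = 0$,
\begin{equation*}
D_f(\mu_t, \nu_t) = D_f(\mu_T Q_{T,t}, \nu_T Q_{T,t}) \le D_f(\mu_T, \nu_T),
\end{equation*}
and likewise with arguments swapped. No analysis of the score or the SDE is needed beyond knowing that $Q_{T,t}$ is a bona fide Markov kernel, which is supplied by \Cref{prop:SDEWellPosed}.

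For part 1, my plan is to exploit the Radon--Nikodym derivative $h_t := d\nu_t/d\mu_t$, which exists on $(0,T]$ because $\nu_T \ll \mu_T$ and Markov kernels preserve absolute continuity. The central observation is that $h_t$ evaluated along the forward process is a martingale: writing $X$ for the forward process with $X_0 \sim \mu_0$ and $\mathcal{F}_t = \sigma(X_s : s \le t)$, a direct Bayes' rule calculation gives $h_t(X_t) = \mathbb{E}[h_T(X_T) \mid \mathcal{F}_t]$, whence
\begin{equation*}
\mathbb{E}[h_t(X_t) \mid \mathcal{F}_s] = h_s(X_s) \quad \text{for } 0 < s \le t \le T.
\end{equation*}
As $t \to 0^+$ the conditioning $\sigma$-algebras decrease to $\mathcal{F}_{0+}$, so $\{h_t(X_t)\}$ is a backward-index (L\'evy) martingale and converges $\mathbb{P}$-a.s.\ and in $L^1$ to $h_0(X_0) := \mathbb{E}[h_T(X_T) \mid \mathcal{F}_{0+}]$. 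This yields the a.s.\ existence of $Y_0 = \lim_{t \to 0^+} Y_t$ and identifies $h_0$ as a density of $\nu_0$ against $\mu_0$, proving $\nu_0 \ll \mu_0$.

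The closing step extends the inequality of part 2 down to $t = 0$. From $Y_t \to Y_0$ a.s.\ and $X_t \to X_0$ a.s.\ we obtain weak convergence $\nu_t \Rightarrow \nu_0$ and $\mu_t \Rightarrow \mu_0$, and lower semi-continuity of $D_f$ under joint weak convergence (Fatou applied to its variational representation) gives
\begin{equation*}
D_f(\mu_0, \nu_0) \le \liminf_{t \to 0^+} D_f(\mu_t, \nu_t) \le D_f(\mu_T, \nu_T),
\end{equation*}
and symmetrically for the swapped arguments. The main obstacle is part 1: since $\mu_0$ may be singular (the motivating setting is discrete or manifold-supported data) and the score typically blows up at $t = 0$, direct pathwise SDE analysis near the origin is unavailable. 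The martingale structure of $h_t$ circumvents this by reducing terminal existence to a soft $L^1$/a.s.\ convergence statement at the level of densities, which is the essential technical device of the proof.
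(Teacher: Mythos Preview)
The paper does not actually prove this theorem: it is stated as background material and attributed to \cite[Thm.~1]{pidstrigach2022score}, with the one-line justification that ``the data processing inequality uses the Markov property and states that the divergence after diffusion is less than the divergence before diffusion.'' So there is no in-paper proof to compare against beyond that sentence.

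Your proposal for part 2 is exactly this justification made precise, and is correct as written: once \Cref{prop:SDEWellPosed} gives a genuine Markov kernel $Q_{T,t}$, the data processing inequality for $f$-divergences applies verbatim, and lower semicontinuity handles the limit $t\to 0^+$.

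For part 1 there is a small logical gap. Martingale convergence of $h_t(X_t)$ as $t\downarrow 0$ gives you a limiting density $h_0$ and hence $\nu_0 \ll \mu_0$, but it does not by itself yield the \emph{pathwise} statement $\lim_{t\to 0^+} Y_t$ exists a.s. The missing step is to pass from densities of marginals to absolute continuity of \emph{path laws}: the uniformly integrable martingale $h_t(X_t)=\mathbb{E}[h_T(X_T)\mid\mathcal{F}_t]$ defines a change of measure on the full path $\sigma$-algebra $\mathcal{F}_T$ with density $h_T(X_T)$, and under this new measure the coordinate process has the law of $Y$. Since the forward process has continuous paths with $X_t\to X_0$ $\mathbb{P}$-a.s., and the $Y$-law on path space is absolutely continuous with respect to $\mathbb{P}$, the same limit exists $Y$-a.s. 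This is the argument in the cited reference; your martingale computation is the right engine, but the sentence ``This yields the a.s.\ existence of $Y_0$'' needs that path-space absolute-continuity bridge to be complete.
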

This theorem shows that for an $f$-divergence, such as total variation distance or Kullback--Leibler divergence, convergence of the marginals at time $T$ implies convergence of the backwards-diffused marginals at time $0$ (also at any $t<T$). However, this requires absolute continuity of the initial marginal distribution $\nu_T$ with respect to $\mu_T$, equivalently, w.r.t. Lebesgue measure. This precludes useful bounds for singular initializations of $\nu_T$, such as empirical measures. 

\section{D\textsuperscript{4}M algorithm}\label{app:d4m}
\begin{algorithm2e}[H]
      \caption{D\textsuperscript{4}M {\citep{su2024d}}}\label{alg:D4M}
      \KwData{Training data and labels $(\gT, \gL)$, pre-trained encoder-decoder pair $(\gE, \gD)$, text encoder $\tau$, latent diffusion model $\gU_t$, target IPC count $K$}
      Initialize latent variables $Z = \gE(\gT)$\;
      \For{label $L \in \gL$}{
            Initialize latent centroids $z^k_L,\, k=1,...,K$\;
            Initialize update counts $v^k_L = 1,\, k=1,...,K$\;
            \Comment{Compute prototypes with $k$-means}
            \For{minibatch $\rvz \in Z|_L$}{ 
                  \For{$z \in \rvz$}{
                        $\hat{k} \gets \argmin_k \|z^k_L - z\|$\;
                        \Comment{Update learning rate}
                        $v^{\hat{k}}_L \gets v^{\hat{k}}_L + 1$\; 
                        \Comment{Update centroid}
                        $z^{\hat{k}}_L \gets (1 - 1/v^{\hat{k}}_L)z^{\hat{k}}_L + (1/v^{\hat{k}}_L) z$\; 
                  }
            }
            Compute class embedding $y = \tau(L)$\;
            Compute class distilled images $\gS_L = \{\gD \circ \gU_t(z^k_L, y) \mid k=1,..., K\}$\;
      }
      \KwResult{distilled images $\gS = \bigcup_{L \in \gL} \gS_L$}
\end{algorithm2e}

\section{Proof of main Theorem \ref{thm:main}}\label{app:main}
We first require a lemma that controls diffusions for compact measures. In particular, the score is (weakly) monotonic.
\begin{lemma}[{\cite{bardet2018functional}}]\label{lem:squishing}
      Let $\mu \in \gP(\R^d)$ be a probability measure with compact support, say bounded by $R$. Let $g_t(x) = \frac{1}{(2\pi t)^{-d/2}} \exp(-\|x\|^2/2t)$ be the density of the standard normal distribution in $\R^d$. Then if $p_t$ is the density of $\mu * g_t$, it satisfies
      \begin{equation*}
       \langle  x-y, \nabla \log p_t(x) - \nabla \log p_t(y) \rangle \le \left(\frac{R^2}{t^2} - \frac{1}{t}\right) \|x-y\|^2.
      \end{equation*}  
 \end{lemma}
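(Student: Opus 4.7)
The natural route is to rewrite the score via Tweedie's formula, differentiate the posterior-mean map to get an explicit Hessian of $\log p_t$ as a posterior covariance, bound that covariance with the compact support hypothesis, and then integrate along the segment $[y,x]$. Writing $p_t(x) = \int g_t(x-z)\,\mu(\mathrm{d}z)$ and differentiating inside the integral (justified by dominated convergence since $g_t$ is smooth and $\mu$ is compactly supported), one finds
\begin{equation*}
  \nabla \log p_t(x) = \tfrac{1}{t}\bigl(m(x) - x\bigr),\qquad m(x) := \mathbb{E}[Z \mid X=x],
\end{equation*}
under the joint law $X = Z + \sqrt{t}\,G$ with $Z \sim \mu$ and $G \sim \mathcal{N}(0,I_d)$. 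The claim then reduces to showing $\langle x-y,\, m(x)-m(y)\rangle \le \frac{R^2}{t}\|x-y\|^2$.

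\textbf{Jacobian of the posterior mean.} A short calculation, using that the log-posterior density of $Z$ given $X=x$ has $x$-gradient $(z-m(x))/t$, gives
\begin{equation*}
   \nabla m(x) = \tfrac{1}{t}\,\mathrm{Cov}(Z \mid X=x),
\end{equation*}
so that $\nabla^2 \log p_t(x) = \tfrac{1}{t^2}\mathrm{Cov}(Z\mid X=x) - \tfrac{1}{t}I$. Since $\supp \mu \subset B(0,R)$ and the posterior only reweights $\mu$ by the positive Gaussian likelihood, $Z\mid X=x$ is also supported in $B(0,R)$. Hence for any unit vector $u$,
\begin{equation*}
   u^\top \mathrm{Cov}(Z\mid X=x)\, u = \mathrm{Var}(u\cdot Z \mid X=x) \le \mathbb{E}[(u\cdot Z)^2 \mid X=x] \le R^2,
\end{equation*}
yielding the uniform operator-norm bound $\nabla^2 \log p_t(x) \preceq \bigl(R^2/t^2 - 1/t\bigr)I$.

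\textbf{Conclusion.} Applying the fundamental theorem of calculus along the segment from $y$ to $x$,
\begin{equation*}
  \langle x-y,\, \nabla\log p_t(x) - \nabla\log p_t(y) \rangle = \int_0^1 (x-y)^\top \nabla^2 \log p_t\bigl(y + s(x-y)\bigr)(x-y)\,\mathrm{d}s,
\end{equation*}
and the pointwise bound on $\nabla^2 \log p_t$ finishes the proof.

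\textbf{Main obstacle.} The analytic content is light; the only subtle point is rigorously justifying the differentiation of $m(x)$ and the identity $\nabla m = \mathrm{Cov}(Z\mid X=x)/t$. Both reduce to applying dominated convergence twice, which works cleanly because $\mu$ has compact support (so all posterior moments are uniformly bounded) and $g_t$ is smooth with Gaussian decay. Everything else is algebra, and the sign structure $R^2/t^2 - 1/t$ emerges transparently as the ``covariance over $t^2$ minus identity over $t$'' decomposition of the Hessian.
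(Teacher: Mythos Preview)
Your proof is correct and essentially identical to the paper's: both establish the Hessian bound $\nabla^2 \log p_t \preceq (R^2/t^2 - 1/t)I$ and then integrate along the segment $[y,x]$. The paper writes $\log p_t(x) = -\|x\|^2/(2t) - W_t(x) + \text{const}$ with $-W_t$ the log-moment-generating function of an exponentially tilted version of $\mu$ and cites the bound $0 \preceq -\nabla^2 W_t \preceq (R^2/t^2)I$ from \cite{bardet2018functional}; your Tweedie/posterior-covariance formulation computes exactly the same object, since the tilted measure is the posterior $Z\mid X=x$ and $-\nabla^2 W_t(x) = t^{-2}\,\mathrm{Cov}(Z\mid X=x)$.
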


 \begin{proof}
      From \cite[Sec. 2.1]{bardet2018functional}, the density $p_t$ can be written as 
      \begin{equation*}
            p_t(x) = \frac{1}{(2 \pi t)^{d/2}}\exp\left(-\left(\frac{\|x\|^2}{2t} + W_t(x)\right)\right),
      \end{equation*}
      where 
      \begin{equation*}
            W_t(x) = -\log \int_{\R^d} \exp(\frac{\langle x,z\rangle}{t}) \nu(\mathrm{d}z) - \log C_\nu,
      \end{equation*}
      with $C_\nu(x) = \int_{\R^d}\exp(-\|x\|^2/2t)\mu(\mathrm{d}x)$ and $\nu(\mathrm{dx}) = C_\nu^{-1} \exp(-\|x\|^2/2t) \mu(\mathrm{dx})$. Moreover,
      \begin{equation}
            0 \le -\nabla^2 W_t \le \frac{R^2}{t^2} I_d.
      \end{equation}
      Therefore, 
      \begin{equation*}
            \log p_t(x) = -\frac{d}{2}\log(2 \pi t) - \frac{\|x\|^2}{2t} - W_t(x)
      \end{equation*}
      has Hessian satisfying
      \begin{equation*}
            \nabla^2 \log p_t(x) \le \left(\frac{R^2}{t^2} - \frac{1}{t}\right)I_d.
      \end{equation*}
      Therefore, $\nabla \log p_t$ satisfies the desired monotonicity condition. Note that $R$ can be strengthened to $\frac{1}{2}\diam (\supp \mu)$.
\end{proof}

We next require the following proposition, which can be thought of as a stochastic version of Gronwall's inequality. We present a simple version of the even stronger result available in \cite{hudde2021stochastic}.


\begin{proposition}[{\citealt[Lem. 3.6]{hudde2021stochastic}}]\label{prop:basicContraction}
      Consider the diffusion 
      \begin{equation}\label{eq:fwdSDE}
            \dd{x} = f(x,t) \dd{t} + g(x,t) \dd{W}.
      \end{equation}
      Suppose that there exists a measurable $\phi:[0,T] \rightarrow [0,\infty]$ satisfying $\int_0^T \phi(t) \dd{t} < +\infty$, and that for all $t \in [0,T]$ and $x,y \in \R^d$, 
      \begin{equation}
            \langle x-y, f(x,t) - f(y,t)\rangle + \frac{1}{2} \|g(x,t) - g(y,t)\|^2_{\mathrm{HS}} \le \phi(t) \|x-y\|^2.
      \end{equation}
      Then for processes $X^x_t, X^y_t$ starting at $x,y$ respectively under \labelcref{eq:fwdSDE}, it holds for all $t \in (0,T]$ that
      \begin{equation}\label{eq:diffusionbound}
            \|X_t^x - X_t^y\|_{L^1(\Omega; \R^d)} \le \|x-y\| \exp(\int_0^t \phi(s) \dd{s}).
      \end{equation}
\end{proposition}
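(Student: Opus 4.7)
The plan is to apply It\^o's formula to $\|Z_t\|^2$ where $Z_t = X_t^x - X_t^y$, use the one-sided monotonicity hypothesis to dominate the drift by a linear term in $\|Z_t\|^2$, and conclude via a localization argument followed by Gronwall's and Jensen's inequalities. The $L^1$ statement follows from the $L^2$ contraction by Jensen, since the hypothesis naturally couples drift and diffusion in a way that controls the second moment of $Z_t$ directly.

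First I would write the SDE for the difference process:
\begin{equation*}
dZ_t = \bigl[f(X_t^x,t) - f(X_t^y,t)\bigr]\,\dd{t} + \bigl[g(X_t^x,t) - g(X_t^y,t)\bigr]\,\dd{W_t},\quad Z_0 = x-y.
\end{equation*}
Applying It\^o to $\|Z_t\|^2$ and recognizing the It\^o correction as a Hilbert--Schmidt quadratic-variation term yields
\begin{equation*}
\dd\|Z_t\|^2 = \Bigl(2\langle Z_t, f(X_t^x,t) - f(X_t^y,t)\rangle + \|g(X_t^x,t) - g(X_t^y,t)\|_{\mathrm{HS}}^2\Bigr)\dd{t} + \dd{M_t},
\end{equation*}
where $M_t$ is a local martingale (twice $\int_0^t \langle Z_s, [g(X_s^x,s)-g(X_s^y,s)]\dd{W_s}\rangle$). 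The hypothesis applied to $(X_t^x,X_t^y)$ bounds the drift integrand by $2\phi(t)\|Z_t\|^2$, giving the pathwise differential inequality $\dd\|Z_t\|^2 \le 2\phi(t)\|Z_t\|^2 \dd{t} + \dd{M_t}$.

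To take expectations rigorously, I would localize with $\tau_n = \inf\{t \ge 0 : \|Z_t\| \ge n\}$, so that $M_{\cdot \wedge \tau_n}$ is a true martingale. Integrating on $[0,t\wedge\tau_n]$ and taking expectations yields
\begin{equation*}
\E\|Z_{t\wedge\tau_n}\|^2 \le \|x-y\|^2 + 2\int_0^t \phi(s)\,\E\|Z_{s\wedge\tau_n}\|^2\,\dd{s},
\end{equation*}
since $\phi \ge 0$ allows the indicator $\mathbf{1}_{s \le \tau_n}$ to be dropped with only an increase in the right-hand side. Gronwall's inequality then gives $\E\|Z_{t\wedge\tau_n}\|^2 \le \|x-y\|^2 \exp(2\int_0^t \phi(s)\dd{s})$. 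Sending $n \to \infty$ via Fatou's lemma preserves the bound, and Jensen's inequality $\E\|Z_t\| \le (\E\|Z_t\|^2)^{1/2}$ yields the claimed $L^1$ estimate.

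The main obstacle is justifying the removal of the local martingale upon taking expectations without a priori integrability of the process: the localization and Fatou step must be executed carefully, and one must verify $\tau_n \to \infty$ almost surely, which follows by a Markov inequality applied to the uniform-in-$n$ second-moment bound above. A secondary subtlety is that the hypothesis $\int_0^T \phi(t)\dd{t} < +\infty$ with $\phi$ only measurable (not continuous) requires the generalized Gronwall inequality for integrable kernels, but this is standard. Everything else is a routine It\^o computation; no additional structure on $f,g$ beyond the stated monotonicity (and implicit existence/uniqueness of strong solutions on $[0,T]$) is needed.
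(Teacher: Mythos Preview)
Your argument is correct and is exactly the standard route to this type of stochastic Gronwall estimate: It\^o on $\|Z_t\|^2$, localize to kill the local martingale, apply the one-sided bound, Gronwall in $L^2$, then Jensen to descend to $L^1$. Note, however, that the paper does not give its own proof of this proposition; it is quoted directly from \cite[Lem.~3.6]{hudde2021stochastic} as a black box (the paper even remarks that a stronger result is available there), so there is nothing to compare against beyond the fact that your derivation is the expected one for results of this kind.
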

Our goal is to ``pushforward'' the convergence of the Wasserstein distance from the latent space (of optimal quantizations) through the diffusion model (backwards SDE \labelcref{eq:backwardSDE}) into the image space/manifold. We proceed with the proof.

\begin{theorem}
      Consider the VESDE/Brownian motion 
      \begin{equation}
            \dd{x} = \dd{W}
      \end{equation}
      or the VPSDE/Ornstein--Uhlenbeck process
      \begin{equation}
            \dd{x} = -\frac{1}{2} x \dd{t} + \dd{W}.
      \end{equation}
      Then, for any initial data distribution $\mu \in \gP_2(\R^d)$ with compact support bounded by $R>0$, the assumptions for \Cref{prop:SDEWellPosed} hold and the backwards diffusion process is well posed. 

      Suppose further that there are two distributions $\mu_T, \nu_T$ at time $T$ that undergo the reverse diffusion process (with fixed initial reference measure $\mu$) up to time $t=\delta \in (0,T)$ to produce distributions $\mu_\delta, \nu_\delta$. Then there exists a constant $C = C(\delta, R, d) \in (0,+\infty)$ such that if $f:\R^d \rightarrow \R^n$ is an $L$-Lipschitz function, then the difference in expectation satisfies
      \begin{equation}\label{eq:mainThmBound}
            \|\mathbb{E}_{\mu_\delta}[f] - \mathbb{E}_{\nu_\delta}[f]\| \le CL \gW_2(\mu_T, \nu_T).
      \end{equation}
\end{theorem}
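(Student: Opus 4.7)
The plan is to reduce \labelcref{eq:mainThmBound} to a stochastic Gr\"onwall estimate (Proposition \ref{prop:basicContraction}) applied to the backward SDE, with the time-singular score controlled by the one-sided monotonicity from Lemma \ref{lem:squishing}. First I reparameterize in reverse time $s = T - t$, so that $Y_s \coloneqq X_{T-s}$ solves a standard forward SDE on $[0, T-\delta]$ with drift $b(y,s) = -f(y, T-s) + g(T-s)^2 \nabla \log p_{T-s}(y)$ and the same additive diffusion coefficient. For the VESDE this reads $\dd{Y_s} = \nabla \log p_{T-s}(Y_s)\,\dd{s} + \dd{\tilde W_s}$, and for the VPSDE $\dd{Y_s} = [Y_s/2 + \nabla \log p_{T-s}(Y_s)]\,\dd{s} + \dd{\tilde W_s}$.

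Next, since $\mu$ has support bounded by $R$, the marginal $p_t$ is a Gaussian convolution of a compactly supported measure: directly for the VESDE, and after the deterministic rescaling $x \mapsto e^{-t/2}x$ of $\mu$ for the VPSDE. Lemma \ref{lem:squishing} then yields the one-sided monotonicity
\[
\langle y_1 - y_2,\, \nabla \log p_t(y_1) - \nabla \log p_t(y_2) \rangle \le \alpha(t)\, \|y_1 - y_2\|^2,
\]
with $\alpha(t) = R^2/t^2 - 1/t$ for the VESDE and $\alpha(t) = e^{-t}R^2/(1-e^{-t})^2 - 1/(1-e^{-t})$ for the VPSDE. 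The diffusion coefficient is additive so its Hilbert--Schmidt contribution vanishes; combined with the explicit deterministic drift (zero for VESDE, $+\tfrac{1}{2}\|y_1-y_2\|^2$ from the $y/2$ term for VPSDE), this verifies the hypothesis of Proposition \ref{prop:basicContraction} with $\phi(s) = \alpha(T-s)$ (plus $1/2$ in the VPSDE case). Running only up to $s = T - \delta$ cuts off the singularity at $t=0$, and the explicit integral $\int_\delta^T \alpha(t)\,\dd{t} = R^2(1/\delta - 1/T) - \log(T/\delta)$ in the VESDE case (analogous for VPSDE) yields the explicit constant $C = C(\delta,T,R,d) \in (0,+\infty)$.

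I then apply Proposition \ref{prop:basicContraction} pathwise: for two deterministic initial conditions $x, y$ driven by the same Brownian motion, the proposition gives $\mathbb{E}\|Y^x_{T-\delta} - Y^y_{T-\delta}\| \le C \|x-y\|$. Letting $(X_0,Y_0) \sim \gamma$ be an optimal $\gW_2$-coupling of $(\mu_T,\nu_T)$, conditioning on the initial pair and integrating against $\gamma$ furnishes a coupling of $(\mu_\delta, \nu_\delta)$ proving $\gW_1(\mu_\delta, \nu_\delta) \le C\,\gW_1(\mu_T, \nu_T) \le C\,\gW_2(\mu_T, \nu_T)$, the last step by Jensen. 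The theorem follows componentwise by Kantorovich--Rubinstein duality applied to the $L$-Lipschitz $f$:
\[
\|\mathbb{E}_{\mu_\delta}[f] - \mathbb{E}_{\nu_\delta}[f]\| \;\le\; L\,\gW_1(\mu_\delta,\nu_\delta) \;\le\; CL\,\gW_2(\mu_T, \nu_T).
\]

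The main obstacle I anticipate is the bookkeeping for the VPSDE: its marginal is a Gaussian convolution of a \emph{time-dependent} rescaling of $\mu$ with noise variance $1 - e^{-t}$ rather than $t$, so applying Lemma \ref{lem:squishing} cleanly requires carefully substituting the effective radius $e^{-t/2}R$ and the effective noise level into the Laplace-type representation used in its proof, and then checking that the resulting $\phi$ remains integrable on $[0, T-\delta]$. A subsidiary care point is verifying the hypotheses of Proposition \ref{prop:SDEWellPosed} so that the backward process is legally defined with a drift blowing up as $s \to T$ (i.e.\ $t \to 0$); this is precisely why the cutoff $\delta > 0$ is essential and also why working with the shared reference score $\nabla\log p_t$ (rather than any score associated to the singular approximating measures $\nu_T$) is what makes the monotonicity estimate accessible.
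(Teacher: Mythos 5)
Your proposal matches the paper's proof essentially step for step: both invoke Lemma~\ref{lem:squishing} to get one-sided monotonicity of the score (with the time-dependent rescaling and noise level $1-e^{-t}$ for the VPSDE), both apply Proposition~\ref{prop:basicContraction} to the time-reversed SDE run up to $s=T-\delta$ to obtain the $L^1$-contraction with explicit constant, and both push an optimal coupling of $(\mu_T,\nu_T)$ through the backward flow before finishing with the Lipschitz bound and Cauchy--Schwarz. The only cosmetic difference is that you route the final estimate through $\gW_1(\mu_\delta,\nu_\delta)\le C\,\gW_1(\mu_T,\nu_T)\le C\,\gW_2(\mu_T,\nu_T)$ and Kantorovich--Rubinstein, whereas the paper bounds $\|\mathbb{E}_{\mu_\delta}[f]-\mathbb{E}_{\nu_\delta}[f]\|$ directly against the lifted coupling — these are the same inequality written in a different order.
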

\begin{proof}
      The main idea is to push a Wasserstein-optimal coupling through the backwards SDE, then using \Cref{prop:basicContraction} and \Cref{thm:weightedAvgWass}. First fix a $\delta\in (0,T]$, and let $\supp \mu$ be bounded by $R$. Let $g_t = \frac{1}{(2\pi t)^{-d/2}} \exp(-\|x\|^2/2t)$ denote the distribution of the Gaussian $\mathcal{N}(0,t I_d)$ in $d$ dimensions. By the H\"ormander condition, the densities of a random variable $X_t$ with initial distribution $X_0 \sim \mu$ undergoing the Brownian motion or Ornstein--Uhlenbeck process exist, and furthermore the forward and backward SDE processes are diffusions \citep{malliavin1978stochastic,hairer2011malliavin,pidstrigach2022score}. 

      \textbf{Step 1.} (\textit{Monotonicity of the drifts.}) For the Brownian motion, 
      \begin{equation*}
            p_t(x) = (\mu(z) * g_t)(x).
      \end{equation*}
      For the Ornstein--Uhlenbeck process, the solution from initial condition $X_0 = x_0$ is
      \begin{equation*}
            x_t = x_0 e^{-t/2} + W_{1-e^{-t}}.
      \end{equation*}
      The law of $X_t$ is thus $\mu(e^{t/2} x) * g_{1-\exp(-t)}$, where $\mu(e^{t/2} x)$ has support bounded by $R e^{-t/2}$. 
      
      Apply Lemma \labelcref{lem:squishing} with $\mu$ for the Brownian motion, and $\mu(e^{t/2}x)$ for the Ornstein--Uhlenbeck process. The corresponding backward SDEs (in forward time) for the Brownian motion and Ornstein--Uhlenbeck process as given by the forward time versions of \labelcref{eq:backwardSDE} are
      \begin{gather*}
            \mathrm{d} x = \nabla \log p_{T-t}(x) \mathrm{d} t + \mathrm{d}W \quad \text{for Brownian motion;}\\
            \mathrm{d} x = \left[\frac{x}{2} + \nabla \log p_{T-t}(x)\right] \mathrm{d} t + \mathrm{d}W \quad \text{for the OU process,}
      \end{gather*}
      where $p_{T-t}(x)$ is the law of $X_{T-t}$ for $t \in [0,T)$. 

      \textbf{Step 2.} (\textit{Lipschitz w.r.t. initial condition after diffusion.}) For the backwards SDEs, since the score is (weakly) monotone and the diffusion term is constant, the backwards SDEs satisfy the monotonicity condition in Proposition \ref{prop:basicContraction}. Hence, there exists a constant $C$ such that for any $Y_t^x, Y_t^y$ evolving according to the backwards SDE,
      \begin{equation}
            \mathbb{E}\|Y_{T-\delta}^x - Y_{T-\delta}^y\| \le C\|x-y\|.
      \end{equation}

      From the monotonicity condition and Proposition \ref{prop:basicContraction}, the constants can be chosen to be as follows, noting the diffusion term is constant:
      \begin{equation}
            \log C_{\text{Brownian}} = \int_\delta^T \left[\frac{R^2}{t^2} - \frac{1}{t} \right]\dd{t},\quad \log C_{\text{OU}} = \int_\delta^T \left[\frac{R^2 e^{-t}}{(1-e^{-t})^2} - \frac{1}{1-e^{-t}}\right]\dd{t}.
      \end{equation}
      \textbf{Step 3.} (\textit{Lift to function expectation.}) Now let $Y_t, \hat{Y}_t$ be two diffusions, initialized with distributions $Y_0^x \sim \mu_T$,\, $\hat{Y}_t^x \sim \nu_T$. Let $\gamma \in \Gamma(\mu_T, \nu_T)$ be any coupling. Define the ``lifted coupling'' $\hat{\gamma}$ on the measurable space $\left((\R^d \times \R^d) \times \Omega, \gB(\R^d \times \R^d) \otimes \gF\right)$, where $(\Omega, \gF, \mathbb{P})$ is the underlying (filtered) probability space of the diffusion, as the pushforward of the diffusion:
      \begin{equation}
            \hat{\gamma} = (Y_0 \mapsto Y_{T-\delta}, \hat{Y}_0 \mapsto \hat{Y}_{T-\delta}, \iota_{\Omega})_{\sharp}(\gamma \otimes \mathbb{P})
      \end{equation}
      Marginalizing over $\mathbb{P}$, this is a (probability) measure on $\R^d \times \R^d$ since the backward SDE paths are continuous: the backward SDEs admit the following integral formulation, where $h(r, Y_r)$ is the drift term of the backward SDE:
      \begin{equation*}
            Y_t = Y_0 + \int_0^{t} h(r, Y_r)\dd{r} + W_t.
      \end{equation*}
      Moreover, $\hat{\gamma}$ is a coupling between $Y_{T-\delta} \sim \mu_{\delta}$ and $\hat{Y}_{T-\delta} \sim \nu_\delta$. We compute:
      \begin{align*}
            \|\mathbb{E}_{\mu_\delta}[f] - \mathbb{E}_{\nu_\delta}[f]\| &= \|\iint \mathbb{E}[f(x) - f(y)] \dd{\hat{\gamma}(x,y)}\| \\
            &\le \iint \mathbb{E}[\|f(Y_{T-\delta}^x) - f(\hat{Y}_{T-\delta}^y)\|] \dd{\gamma(x,y)} \\
            &\le L \iint \|Y_{T-\delta}^x - \hat{Y}_{T-\delta}^y\|\dd{{\gamma}(x,y)} \\
            &\le CL \iint \|Y_{0}^x - \hat{Y}_{0}^y\| \dd{\gamma(x,y)} \\
            &= CL \iint \|x-y\| \dd{\gamma(x,y)} \le CL \left(\iint \|x-y\|^2 \dd{\gamma(x,y)}\right)^{1/2}.
      \end{align*}
      The desired inequality follows by taking infimums over admissible couplings $\gamma \in \Gamma(\mu_T, \nu_T)$.
\end{proof}

\section{Approximate Timings}
All times are done on Nvidia A6000 GPUs with 48GB of VRAM. We note that synthesis time as reported in \cite{su2024d,sun2024diversity} do not include the time required to generate the latent variables, and thus are not sufficiently representative of the end-to-end time required to distill the dataset.

\begin{table}[H]
      \centering
      \caption{Time required for each step of dataset distillation on ImageNet-1K. Synthesis requires application of the Stable Diffusion V1.5 model to each distilled latent variable, and soft label requires application of the pre-trained ResNet-18 model to each distilled image. Memory usage is constant between IPCs due to equal batch size.}
      \begin{tabular}{@{}ccc@{}}
      \toprule
      Step                  & Time (IPC 10) & Time (IPC 100) \\ \midrule
      1 (Latent clustering) & 8 hours       & 8 hours        \\
      2 (Synthesis)         & 2 hours       & 1 day          \\
      3 (Soft label)        & 1 hour       & 16 hours       \\
      4 (Training ResNet18) & 2.5 hours     & 9 hours        \\ \bottomrule
      \end{tabular}
\end{table}

\section{Experiment Hyperparameters}\label{app:hparams}
We detail the parameters when training the student networks from the distilled data. They are mostly similar to \cite{su2024d}. 

For consistency and a more direct comparison with previous methods, we use the pre-trained PyTorch ResNet-18 model to compute the soft labels, using the same protocol as \cite{su2024d}. After computing the soft labels using the pre-trained ResNet-18 model, we train new ResNet-18, ResNet-50 and ResNet-101 models to match the soft labels. The data augmentation is also identical, so that the only differences are the addition of the weights to the training objective and some minor hyperparameter tuning for the new objective. The latent diffusion model chosen for latent generation and image synthesis is the publicly available pre-trained Stable Diffusion V1.5 model, the same as D\textsuperscript{4}M.

\begin{table}[h]
      \centering
      \caption{Hyperparameter setting for ImageNet-1K experiments.}
      \begin{tabular}{@{}cc@{}}
      \toprule
       Setting & Value \\ \midrule
       Network & ResNet \\
       Input size & 224 \\
       Batch size & 1024 \\
       Training epochs & 300  \\
       Augmentation & RandomResizedCrop \\
       Min scale & 0.08 \\
       Max scale & 1 \\
       Temperature & 20 \\
       Optimizer & AdamW \\
       Learning rate & 2e-3 for Resnet18, 1e-3 otherwise \\
       Weight decay & 0.01 \\
       Learning rate schedule & $\eta_{k+1} = \eta_k/4$ at epoch 250 \\ \bottomrule
      \end{tabular}
\end{table}

\textbf{Variance reduction (heuristic).} We note that the variance of the number of cluster assignments can vary significantly, sometimes up to two orders of magnitude, such as in \Cref{fig:jeep}. After normalizing the cluster counts in Step 3 of \Cref{alg:ddoq} to give weights $w \in (0,1)$, most weights are very small, and do not contribute much to the neural network training. To reduce this effect, we use the per-centroid weights as follows, 
\begin{equation}\label{eq:heuristicweight}
      w_k^{(L)} = \sqrt{K}\sqrt{v_k^{(L)}\big/\sum_{j=1}^K v_{j}^{(L)}}.
\end{equation}
{
\section{Ablation on weighting}\label{app:weightingAblation}
To test the effect of the heuristic weights used in \labelcref{eq:heuristicweight}, we consider the setting of the DiT backbone \Cref{ssec:dit}. We consider three different weightings within the training in Step 4:
\begin{enumerate}
    \item Constant weights $w \equiv 1$. This is equivalent to D\textsuperscript{4}M.
    \item The heuristic weighting strategy \labelcref{eq:heuristicweight}.
    \item The direct cluster weights, given by $w_k \propto v_k^{-1}$, where $v_k$ are the cluster counts from the $k$-means, normalized such that the sum over each class is 1.
\end{enumerate}
We use the same cluster images and only differ the weights. \Cref{tab:weightAblation} details the test accuracy for IPC 10 on ImageNet-1K with various weights, applied with a ResNet-18 teacher and ResNet-18 student architectures. We observe that the heuristic weighting strategy outperforms D\textsuperscript{4}M across different learning rates. Moreover, the direct cluster weights are able to obtain higher test accuracy, albeit being more sensitive to the choice of learning rate.

\begin{table}[h]
    \centering
    \caption{Ablation across different weighting strategies when training a new student network with soft labels. We observe that the heuristic weight used in the main experiments always outperforms D\textsubscript{4}M. The direct cluster weighting is more sensitive to learning rate, but eventually outperforms the heuristic weighting for higher learning rates.}
    \begin{tabular}{c|cccc}\toprule
         Learning rate& 1e-3 & 2e-3 & 5e-3 & 1e-2 \\\midrule
         No weighting (D\textsuperscript{4}M) & 52.1 & 52.5 & 51.5 & 52.2 \\
         Heuristic weight \labelcref{eq:heuristicweight} & 52.3 & 52.6 & 53.2 & 53.6 \\
         Direct cluster weighting & 44.1 & 50.3 & 54.6 & 55.6 \\\bottomrule
    \end{tabular}
    \label{tab:weightAblation}
\end{table}
}
\end{document}